\def\eqref#1{equation~\ref{#1}}
\def\1{\bm{1}}
\DeclareMathAlphabet{\mathsfit}{\encodingdefault}{\sfdefault}{m}{sl}
\SetMathAlphabet{\mathsfit}{bold}{\encodingdefault}{\sfdefault}{bx}{n}
\def\gF{{\mathcal{F}}}
\def\gL{{\mathcal{L}}}
\def\gS{{\mathcal{S}}}
\def\gW{{\mathcal{W}}}
\def\gX{{\mathcal{X}}}
\def\gY{{\mathcal{Y}}}
\def\sR{{\mathbb{R}}}
\newcommand{\E}{\mathbb{E}}
\DeclareMathOperator*{\arginf}{arg\,inf}
\newlength\savewidth
\newlength\thinwidth
\definecolor{Gray}{gray}{0.93}
\newcolumntype{a}{>{\columncolor{Gray}}c}
\definecolor{LightCyan}{rgb}{0.88,1,1}
\definecolor{highlightRowColor}{gray}{0.93}
\definecolor{HighlightBlue}{RGB}{230, 235, 247}
\newcommand{\HC}[1]{\ifthenelse{\isodd{#1}}{\rowcolor{highlightRowColor}}{\rowcolor{white}}}
\definecolor{codegreen}{rgb}{0,0.3,0.6}
\definecolor{codegray}{rgb}{0.5,0.5,0.5}
\definecolor{codepurple}{rgb}{0.58,0,0.82}
\definecolor{backcolour}{rgb}{0.95,0.95,0.92}
\lstdefinestyle{mystyle}{
    basicstyle=\tiny,
    commentstyle=\color{codegreen},
    keywordstyle=\color{magenta},
    numberstyle=\tiny\color{codegray},
    stringstyle=\color{codepurple},
    basicstyle=\fontsize{8.5}{9}\selectfont\ttfamily,
    breakatwhitespace=false,         
    breaklines=true,                 
    captionpos=b,                    
    keepspaces=true,                 
    numbers=left,                    
    numbersep=5pt,                  
    showspaces=false,                
    showstringspaces=false,
    frame = single
}
\newtheorem{thm}{Theorem}
\newtheorem{theorem}[thm]{Theorem}
\newtheorem{lemma}[thm]{Lemma}
\newtheorem{corollary}[thm]{Corollary}
\newcommand{\Lip}{\textnormal{Lip}}
\crefname{section}{Sec.}{Secs.}
\Crefname{section}{Section}{Sections}
\Crefname{table}{Table}{Tables}
\crefname{table}{Tab.}{Tabs.}
\newcommand{\gray}[1]{{\color{gray}{#1}}}
\begin{document}

\title{Robust Contrastive Learning against Noisy Views}

\author{Ching-Yao Chuang$^\dagger$ \; R Devon Hjelm$^\ddag$ \; Xin Wang$^\ddag$ \; Vibhav Vineet$^\ddag$ 
\\
Neel Joshi$^\ddag$ \; Antonio Torralba$^\dagger$ \; Stefanie Jegelka$^\dagger$ \; Yale Song$^\ddag$\\
$^\dagger$MIT CSAIL \; $^\ddag$Microsoft Research\\
{\small \url{https://github.com/chingyaoc/RINCE}}
}
\maketitle

\begin{abstract}
\vspace{-.5em}
Contrastive learning relies on an assumption that positive pairs contain related views, e.g., patches of an image or co-occurring multimodal signals of a video, that share certain underlying information about an instance. But what if this assumption is violated? The literature suggests that contrastive learning produces suboptimal representations in the presence of noisy views, e.g., false positive pairs with no apparent shared information. In this work, we propose a new contrastive loss function that is robust against noisy views. We provide rigorous theoretical justifications by showing connections to robust symmetric losses for noisy binary classification and by establishing a new contrastive bound for mutual information maximization based on the Wasserstein distance measure. The proposed loss is completely modality-agnostic and a simple drop-in replacement for the InfoNCE loss, which makes it easy to apply to existing contrastive frameworks. We show that our approach provides consistent improvements over the state-of-the-art on image, video, and graph contrastive learning benchmarks that exhibit a variety of real-world noise patterns.
\end{abstract}
\vspace{-3mm}

\section{Introduction}
Contrastive learning~\cite{chopra2005learning, hadsell2006dimensionality, oord2018representation} has become one of the most prominent self-supervised approaches to learn representations of high-dimensional signals, producing impressive results with image~\cite{tian2020contrastive, he2020momentum, chen2020simple, chen2020improved, grill2020bootstrap, zbontar2021barlow}, text~\cite{logeswaran2018efficient, chuang2020debiased, robinson2020contrastive, giorgi2020declutr}, audio~\cite{baevski2020wav2vec, saeed2021contrastive, wang2021multi}, and video~\cite{miech2020end, ma2020active, morgado2021audio}. The central idea is to learn representations that capture the underlying information shared between different ``views'' of data~\cite{oord2018representation, tian2020makes}. For images, the views are typically constructed by applying common data augmentation techniques, such as jittering, cropping, resizing and rotation~\cite{chen2020simple}, and for video the views are often chosen as adjacent frames~\cite{sermanet2018time} or co-occurring multimodal signals, such as video and the corresponding optical flow~\cite{han2020self}, audio~\cite{morgado2021audio} and transcribed speech~\cite{miech2020end}.

\begin{figure}[h]
\vspace{-1mm}
\begin{center}
\resizebox{\columnwidth}{!}{
\includegraphics[width=0.8\linewidth]{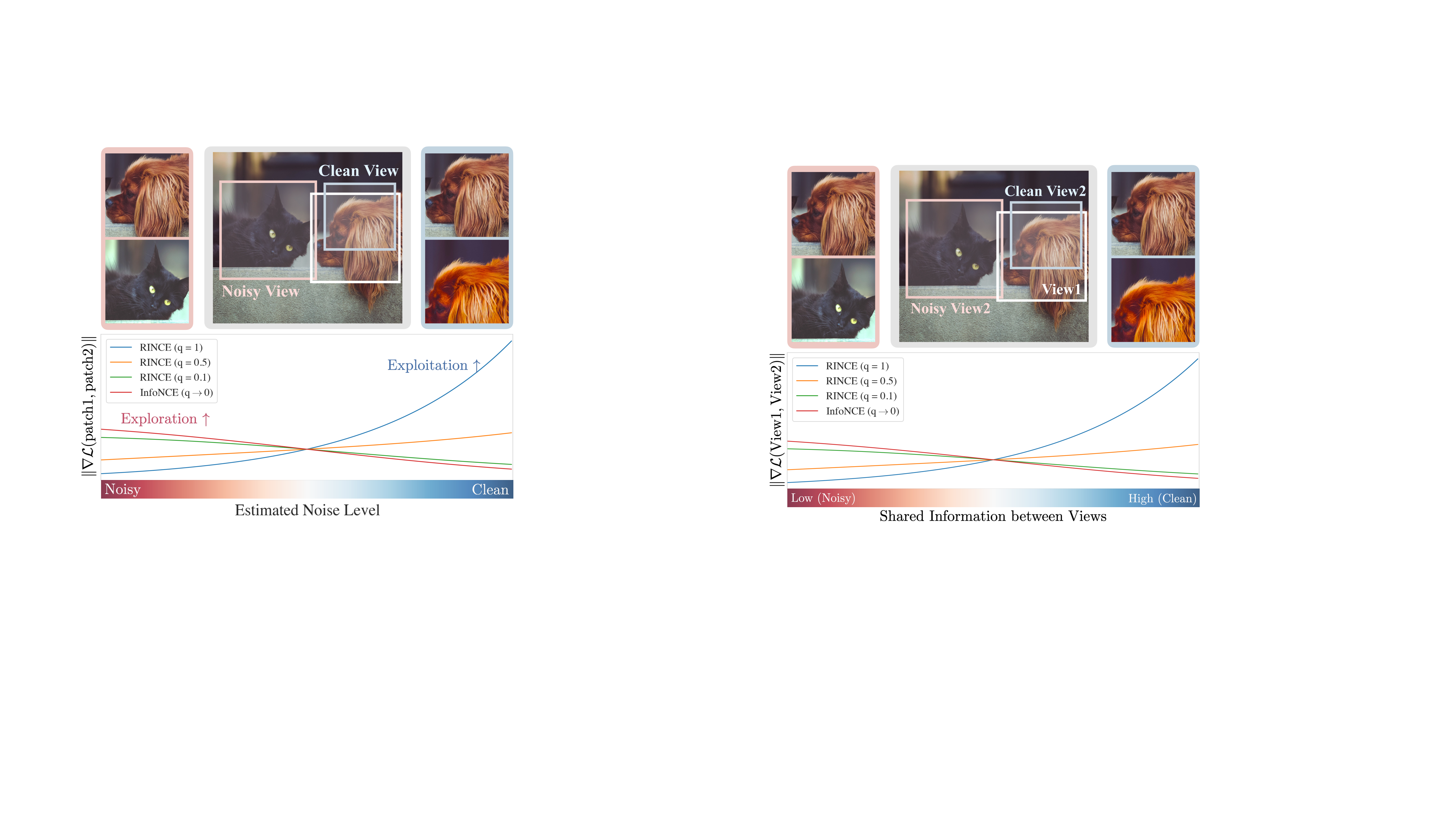}
}
\end{center}
\vspace{-3mm}
\caption{\textbf{Noisy views can deteriorate contrastive learning.} We propose a new contrastive loss function (RINCE) that rescales the sample importance in the gradient space based on an estimated noise level. With a simple turn of a knob ($q\in(0,1]$), we can upweight or downweight sample pairs with low shared information.}
\vspace{-3mm}
\label{fig_1}
\end{figure}

Designing the \emph{right} contrasting views has shown to be a key ingredient of contrastive learning~\cite{chen2020simple,patrick2021compositions}. This often requires domain knowledge, intuition, trial-and-error (and luck!). What would happen if the views are wrongly chosen and do not provide meaningful shared information? Prior work has reported deteriorating effects of such \emph{noisy} views in contrastive learning under various scenarios, e.g., unrelated image patches due to extreme augmentation~\citep{tian2020makes}, irrelevant video-audio pairs due to overdubbing~\citep{morgado2021robust}, and misaligned video-caption pairs~\citep{miech2020end}. The major issue with noisy views is that representations of different views are forced to align with each other even if there is no meaningful shared information. This often leads to suboptimal representations that merely capture spurious correlations~\cite{arjovsky2019invariant} or make them collapse to a trivial solution~\cite{jing2021understanding}. Worse yet, when we attempt to learn from large-scale unlabeled data -- i.e., the scenario where self-supervised learning is particularly expected to shine -- the issue is only aggravated because of the increased noise in the real-world data~\cite{lee2021acav100m}, hindering the ultimate success of contrastive learning.

Consequently, a few attempts have been made to design contrastive approaches that are noise-tolerant. For example, \citet{morgado2021robust} optimize a soft instance discrimination loss to weaken the impact of noisy views. \citet{miech2020end} address the misalignment between video and captions by aligning multiple neighboring segments of a video. However, existing approaches are often tied to specific modalities or make assumptions that may not hold for general scenarios, e.g., MIL-NCE~\cite{miech2020end} is not designed to address the issues of irrelevant audio-visual signals.

In this work, we develop a principled approach to make contrastive learning robust against noisy views. We start by making connections between contrastive learning and the classical noisy binary classification in supervised learning~\cite{natarajan2013learning, ghosh2015making}. This allows us to explore the wealth of literature on learning with noisy labels~\cite{ghosh2017robust, li2017learning, veit2017learning}. In particular, we focus on a family of robust loss functions that has the \emph{symmetric} property~\cite{ghosh2015making}, which provides strong theoretical guarantees against noisy labels in binary classification. We then show a functional form of contrastive learning that can satisfy the symmetry condition if given a proper symmetric loss function, motivating the design of new contrastive loss functions that provide similar theoretical guarantees. 

This leads us to propose \textbf{R}obust \textbf{I}nfo\textbf{NCE} (RINCE), a contrastive loss function that satisfies the symmetry condition. RINCE can be understood as a generalized form of the contrastive objective that is robust against noisy views. Intuitively, its symmetric property provides an implicit means to reweight sample importance in the gradient space \emph{without requiring} an explicit form of noise estimator. It also provides a simple ``knob'' (a real-valued scalar $q\in(0,1]$) that controls the behavior of the loss function balancing the exploration-exploitation trade-off (i.e., from being conservative to playing adventures on potentially noisy samples). 

We also provide a theoretical analysis of the proposed RINCE objective and show that it extends the analyses by \citet{ghosh2015making} to the self-supervised contrastive learning regime. Furthermore, we relate the proposed loss function to dependency measurement. Analogous to InfoNCE loss, which is a lower bound of mutual information between two views \citep{oord2018representation}, we show that RINCE is a lower bound of Wassersein Dependency Measure (WDM) \citep{ozair2019wasserstein} even in the noisy setting. By replacing the KL divergence in the mutual information estimator with the Wasserstein distance, WDM is able to capture the geometry of the representation space via the equipped metric space and provides robustness against noisy views better than the KL divergence, both in theory and practice. In particular, the features learned with RINCE achieve better class-wise separation, which is proved to be crucial to improve generalization \citep{chuang2021measuring}.

Despite its rigorous theoretical background, implementing RINCE requires only a few lines code and can be a simple drop-in replacement for the InfoNCE loss to make contrastive learning robust against noisy views. Since InfoNCE sets the basis for many modern contrastive methods such as SimCLR~\cite{chen2020simple} and MoCo-v1/v2/v3~\cite{he2020momentum, chen2020improved, chen2021empirical}, our construction can be easily applied to many existing frameworks.

Finally, we provide strong empirical evidence demonstrating the robustness of RINCE against noisy views under various scenarios with different modalities and noise types. We show that RINCE improves over the state-of-the-art in image~\citep{krizhevsky2009learning, deng2009imagenet}, video~\citep{lee2021acav100m,kay2017kinetics} and graph \citep{morris2020tudataset} self-supervised learning benchmarks, demonstrating its generalizability across multiple modalities. We also show that RINCE exhibits strong robustness against different types of noise such augmentation noise~\cite{tian2020makes,you2020graph}, label noise~\cite{natarajan2013learning, khosla2020supervised}, and noisy audio-visual correspondence~\citep{morgado2021robust}. The improvement is consistently observed across different dataset scales and training epochs, demonstrating the scalability and computational efficiency. In short, our main contributions are:
\begin{itemize}[leftmargin=0.5cm]\setlength{\itemsep}{-1pt}
\item We propose RINCE, a new contrastive learning objective that is robust against noisy views of data;
\item We provide a theoretical analysis to relate the proposed loss to symmetric losses and dependency measurement;
\item We demonstrate our approach on real-world scenarios of image, video, and graph contrastive learning.
\end{itemize}

\section{Related Work}

\paragraph{Contrastive Learning}
Contrastive approaches have become prominent in unsupervised representation learning~\cite{chopra2005learning, hadsell2006dimensionality, hjelm2018learning}: InfoNCE~\cite{oord2018representation} and its variants~\cite{he2020momentum, chen2020simple, grill2020bootstrap, zbontar2021barlow} achieve state-of-the-art across different modalities~\citep{logeswaran2018efficient, baevski2020wav2vec, miech2020end, ma2020active, morgado2021audio}. Modern approaches improve upon InfoNCE from different directions. One line of work focuses on modifying training mechanisms, e.g., appending projection head~\citep{chen2020simple}, momentum encoder with dynamic dictionary update~\citep{he2020momentum, chen2020improved}, siamese networks with stop gradient trick~\citep{chen2021exploring, grill2020bootstrap}, and online cluster assignment~\citep{caron2020unsupervised}. Another line of work refines the loss function itself to make it more effective, e.g., upweight hard negatives \citep{kalantidis2020hard, robinson2020contrastive}, correct false negatives~\citep{chuang2020debiased}, and alleviate feature suppression~\citep{robinson2021can}. Along this second line of work, we propose a new contrastive loss function robust against noisy views. Some of prior work in this direction~\cite{morgado2021robust,miech2020end} was demonstrated on limited modalities only; we demonstrate its generality on image \citep{deng2009imagenet}, video \citep{lee2021acav100m,kay2017kinetics}, and graph \citep{velivckovic2018deep, hassani2020contrastive, you2020graph} contrastive learning scenarios. Our approach is orthogonal to the first line of work; our loss function can easily be applied to some of existing training mechanisms such as SimCLR~\cite{chen2020simple} and MoCo-v1/v2/v3~\cite{he2020momentum, chen2020improved, chen2021empirical}.

\vspace{-1em}\paragraph{Robust Loss against Noisy Labels}
Learning with noisy labels has been actively explored in recent years \citep{natarajan2013learning, sukhbaatar2014training, ghosh2015making, xiao2015learning, liu2015classification, patrini2017making, li2017learning, veit2017learning, jiang2018mentornet, ren2018learning,han2018co}. One line of work attempts to develop robust loss functions that are noise-tolerant \citep{ghosh2015making, ghosh2017robust, zhang2018generalized, wang2019symmetric}. \citet{ghosh2015making} prove that symmetric loss functions are robust against noisy labels, e.g., Mean Absolute Error (MAE) \citep{ghosh2017robust}, while commonly used Cross Entropy (CE) loss is not. Based on this idea, \citet{zhang2018generalized} propose the generalized cross entropy loss to combine MAE and CE loss functions. A similar idea is adopted in \citep{wang2019symmetric} by combining the reversed cross entropy loss with CE loss. In the next section, we relate noisy views to noisy labels by interpreting contrastive learning as binary classification, and developed a robust symmetric contrastive loss that enjoys the similar theoretical guarantees.

\section{Prelim: From Noisy Labels to Noisy Views}

We start by connecting two seemingly different but related frameworks: supervised binary classification with noisy labels and self-supervised contrastive learning with noisy views. We then introduce a family of symmetric loss functions that is noise-tolerant and show how we can transform contrastive objectives to a symmetric form.

\subsection{Symmetric Losses for Noisy Labels}

Denoting the input space by $\gX$ and the binary output space by $\gY = \{-1, 1\}$, let 
$\gS = \{x_i, y_i\}_{i=1}^m$ be the unobserved clean dataset that is drawn i.i.d. from the data distribution $\mathcal{D}$. In the noisy setting, the learner obtains a noisy dataset $\gS_\eta = \{x_i, \hat{y}_i\}_{i=1}^m$, where $\hat{y}_i = y_i$ with probability $1 - \eta_{x_i}$ and $\hat{y}_i = -y_i$ with probability $\eta_{x_i}$. Note that the noise rate $\eta_x$ is data point-dependent. For a classifier $f \in \gF: \gX \rightarrow \sR$, the expected risk under the noise-free scenario is $R_{\ell}(f) = \E_\mathcal{D}[\ell(f(x), y)]$ where $\ell: \sR \times \gY \rightarrow \sR$ is a binary classification loss function. When the noise exists, the learner minimizes the noisy expected risk $R_{\ell}^\eta(f) = \E_{\mathcal{D}_\eta}[\ell(f(x), \hat{y})]$. 

\citet{ghosh2015making} show that \emph{symmetric} loss functions are robust against noisy labels in binary classification. In particular, a loss function $\ell$ is symmetric if it sums to a constant:
\begin{align}
    \label{eq_sym}
    \ell(s, 1) + \ell(s, -1) = c, \;\;\;\;\forall s \in \sR,
\end{align}
where $s$ is the prediction score from $f$. Note that the symmetry condition should also hold with the gradients w.r.t. $s$. They show that if the noise rate is $\eta_x \leq \eta_{\max} < 0.5, \forall x \in \gX$ and if the loss is symmetric and non-negative, the minimizer of the noisy risk $f_\eta^\ast = \arginf_{f \in \gF} R^\eta(f)$ approximately minimizes the clean risk:
\begin{align*}
    R(f_\eta^\ast) \leq \epsilon / (1 - 2 \eta_{\max}),
\end{align*}
where $\epsilon = \inf_{f \in \gF} R(f)$ is the optimal clean risk. This implies that the noisy risk under symmetric loss is a good surrogate of the clean risk. In Appendix \ref{sec_exp_bound}, we further relax the non-negative constraint on the loss with a corollary.\footnote{This is important for our proposed RINCE loss that involves an exponential function $\ell(s,y) = -ye^s$, which can produce negative values.}

\subsection{Towards Symmetric Contrastive Objectives}
The results above suggest that we can achieve robustness against noisy views if a contrastive objective can be expressed in a form that satisfies the symmetry condition in the binary classification framework. To this end, we first relate contrastive learning to binary classification, and then express it in a form where symmetry can be achieved.

\vspace{-1em}\paragraph{Contrastive learning as binary classification.} 
Given two views $X$ and $V$, we can interpret contrastive learning as noisy binary classification operating over pairs of samples $(x, v)$ with a label $1$ if it is sampled from the joint distribution, $(x, v) \sim P_{XV}$, and $-1$ if it comes from the product of marginals, $(x, v') \sim P_{X}P_{V}$. In the presence of noisy views, some negative pairs $(x, v') \sim P_{X}P_{V}$ could be mislabeled as positive, introducing noisy labels.

To see this more concretely, let us consider the InfoNCE loss~\cite{oord2018representation}, one of the most widely adopted contrastive objectives~\cite{bachman2019learning,tian2020contrastive, chen2020simple, chuang2020debiased}. It minimizes the following loss function:
\begin{align}
    &\gL_{\textnormal{InfoNCE}}(\textbf{s}) = -\log \frac{e^{s^+}}{e^{s^+} + \sum_{i=1}^K e^{s_i^-} } \nonumber \\
    &:=  -\log \frac{e^{f(x)^T g(v) / t}}{e^{f(x)^T g(v) / t} + \sum_{i=1}^K e^{f(x)^T g(v_i) / t}},
    \label{eq_infonce}
\end{align}
where $\textbf{s} = \{ s^+, \{s_i^-\}_{i=1}^K \}$, $s^+$ and $s_i^-$ are the scores of related (positive) and unrelated (negative) pairs and $t$ is the temperature parameter introduced to avoid gradient saturation. The expectation of the loss is taken over $(x,v) \sim P_{XV}$ and $K$ independent samples $v_i \sim P_V$, where $P_{XV}$ denotes the joint distribution over pairs of views such as transformations of the same image or co-occurring multimodal signals. Although InfoNCE has a functional form of the $(K+1)$-way softmax cross entropy loss, the model ultimately learns to classify whether a pair $(x, v)$ is positive or negative by maximizing/minimizing the positive score $s^+$/negative scores $s_i^-$. Therefore, InfoNCE under noisy views can be seen as binary classification with noisy labels. We acknowledge that similar interpretations have been made in prior works under different contexts~\citep{gutmann2010noise, wu2018unsupervised, tian2020contrastive}.

\vspace{-2mm}
\paragraph{Symmetric form of contrastive learning.} 
Now we turn to a functional form of contrastive learning that can achieve the symmetric property. Assume that we have a noise-tolerant loss function $\ell$ that satisfies the symmetry condition of \eqref{eq_sym}. We say a contrastive learning objective is symmetric if it accepts the following form
\begin{align}
    \label{eq_sym_form}
    \gL(\textbf{s}) = 
    \underbrace{\ell(s^+, 1)}_{\textnormal{Positive Pair}} + \lambda  \underbrace{\sum_{i=1}^K \ell(s_i^-, -1)}_{\textnormal{$K$ Negative Pairs}}
\end{align}
which consists of a collection of $(K+1)$ binary classification losses; $\lambda > 0$ is a density weighting term controlling the ratio between classes $1$ (positive pairs) and $-1$ (negative pairs). Reducing $\lambda$ places more weight on the positive score $s^+$, while setting $\lambda$ to zero recovers the negative-pair-free contrastive loss such as BYOL \cite{grill2020bootstrap}. 

Contrastive objectives that satisfy the symmetric form enjoy strong theoretical guarantees against noisy labels as described in \citet{ghosh2015making}, as long as we plug in the right contrastive loss function $\ell$ that satisfies the symmetry condition. Unfortunately, the InfoNCE loss~\cite{oord2018representation} does not satisfy the symmetry condition in the gradients w.r.t. $s^{+/-}$ (we provide the full derivations in Appendix~\ref{sec_infonce_asymmetric}). This motivates us to develop a new contrastive loss function that satisfies the symmetry condition, described next.

\section{Robust InfoNCE Loss}
\label{sec_rince}

\begin{figure}[t!]
\lstinputlisting[language=Python]{scripts/symclr.py}
\vspace{-3mm}
\caption{\textbf{Pseudocode for RINCE.} The implementation only requires a small modification to the InfoNCE code.} 
\vspace{-3mm}
\label{fig_objective_code}
\end{figure}

\vspace{-1mm}
Based on the idea of robust symmetric classification loss, we present the following Robust InfoNCE (RINCE) loss:
\begin{align*}
    \gL^{\lambda, q}_{\textnormal{RINCE}}(\textbf{s}) = \frac{-e^{q \cdot s^+}}{q} + \frac{(\lambda \cdot (e^{s^+} + \sum_{i=1}^K e^{s_i^-}))^q}{q},
\end{align*}
where $q, \lambda \in (0, 1]$. Figure \ref{fig_objective_code} shows the pseudo-code of RINCE: it is simple to implement. When $q = 1$, RINCE becomes a contrastive loss that fully satisfies the symmetry property in the form of \eqref{eq_sym_form} with $\ell(s, y) = -ye^s$:
\vspace{-2mm}
\begin{align*}
     \gL_{\textnormal{RINCE}}^{\lambda, q=1}(\textbf{s}) = -(1 - \lambda)e^{s^+} + \lambda \sum_{i=1}^K e^{s_i^-}.
\end{align*}
Notice that the exponential loss $-ye^s$ satisfies the symmetric condition defined in \eqref{eq_sym} with $c=0$. Therefore, when $q \rightarrow 1$, we achieve robustness against noisy views in the same manner as binary classification with noisy labels.

In the limit of $q \rightarrow 0$, RINCE becomes asymptotically equivalent to InfoNCE, as the following lemma describes:
\begin{lemma}
For any $\lambda > 0$, it holds that
\begin{align*}
 &\lim_{q \rightarrow 0} \gL_{\textnormal{RINCE}}^{\lambda, q}(\textbf{s}) = \gL_{\textnormal{InfoNCE}}(\textbf{s}) + \log(\lambda);
 \\
  &\lim_{q \rightarrow 0} \frac{\partial}{\partial \textbf{s}}\gL_{\textnormal{RINCE}}^{\lambda, q}(\textbf{s}) = \frac{\partial}{\partial \textbf{s}}\gL_{\textnormal{InfoNCE}}(\textbf{s}).
\end{align*}
\label{lem_asym}
\vspace{-3mm}
\end{lemma}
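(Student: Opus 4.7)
The plan is to compute both limits by direct asymptotic expansion of $\gL_{\textnormal{RINCE}}^{\lambda, q}$ around $q = 0$, treating the scores $s^+$ and $s_i^-$ as fixed. First I would introduce the shorthand $Z = e^{s^+} + \sum_{i=1}^K e^{s_i^-}$ so that
\begin{equation*}
\gL_{\textnormal{RINCE}}^{\lambda, q}(\textbf{s}) = \frac{(\lambda Z)^q - e^{q s^+}}{q},
\end{equation*}
and observe that the numerator is of indeterminate form $0/0$ as $q\to 0$, since both terms converge to $1$. Using $a^q = e^{q \log a} = 1 + q \log a + O(q^2)$ valid for $a > 0$ (here $a = \lambda Z$ and $a = e^{s^+}$, both strictly positive), I would expand both terms to obtain
\begin{equation*}
\gL_{\textnormal{RINCE}}^{\lambda, q}(\textbf{s}) = \log(\lambda Z) - s^+ + O(q) = \log \lambda + \log Z - s^+ + O(q),
\end{equation*}
and recognize $\log Z - s^+ = -\log\bigl(e^{s^+}/Z\bigr) = \gL_{\textnormal{InfoNCE}}(\textbf{s})$. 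Taking $q\to 0$ gives the first identity.

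For the gradient statement I would differentiate the explicit expression for $\gL_{\textnormal{RINCE}}^{\lambda, q}$ componentwise before taking the limit, since the prefactor $1/q$ is cancelled exactly by differentiation. Concretely,
\begin{equation*}
\frac{\partial \gL_{\textnormal{RINCE}}^{\lambda,q}}{\partial s^+} = -e^{q s^+} + \lambda^q Z^{q-1} e^{s^+}, \qquad \frac{\partial \gL_{\textnormal{RINCE}}^{\lambda,q}}{\partial s_i^-} = \lambda^q Z^{q-1} e^{s_i^-}.
\end{equation*}
Letting $q\to 0$ makes $\lambda^q \to 1$ and $Z^{q-1}\to Z^{-1}$, yielding $-1 + e^{s^+}/Z$ and $e^{s_i^-}/Z$ respectively, which are exactly the partials of $\gL_{\textnormal{InfoNCE}}(\textbf{s}) = -s^+ + \log Z$.

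There is no real obstacle: the only subtle point is justifying that one may take the limit after differentiation rather than differentiating the limit. This is immediate here because $\gL_{\textnormal{RINCE}}^{\lambda,q}$ and each of its partials are smooth in $q$ on any open neighborhood of $0$ not containing $q=0$ (the apparent singularity at $q=0$ is removable by the expansion above), and the limits computed are simple continuous limits of explicit elementary expressions. I would therefore keep the write-up short: one display for the asymptotic expansion giving the loss limit, one display for the closed-form partials, and a single sentence observing that $\lambda^q Z^{q-1}\to Z^{-1}$ as $q\to 0$ to conclude the gradient limit.
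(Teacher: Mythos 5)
Your proposal is correct and follows essentially the same route as the paper: the only cosmetic difference is that you evaluate the $0/0$ limit $\lim_{q\to 0}(a^q-1)/q=\log a$ via the Taylor expansion of $e^{q\log a}$ where the paper invokes L'H\^opital's rule, and your gradient computation (differentiate the closed form, then let $q\to 0$) is identical to the paper's. Nothing further is needed.
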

We defer the proofs to Appendix~\ref{sec_proof}. Note that the convergence also holds for the derivatives: optimizing RINCE in the limit of $q \rightarrow 0$ is mathematically equivalent to optimizing InfoNCE. Therefore, by controlling $q \in (0, 1]$ we smoothly interpolate between the InfoNCE loss ($q \rightarrow 0$) and the RINCE loss in its fully symmetric form ($q \rightarrow 1$).

\subsection{Intuition behind RINCE}
\label{sec_tradeoff}
We now analyze the behavior of RINCE through the lens of exploration-exploitation trade-off. In particular, we reveal an implicit easy/hard positive mining scheme by inspecting the gradients of RINCE under different $q$ values, and show that we achieve stronger robustness (more exploitation) with larger $q$ at the cost of potentially useful clean hard positive samples (less exploration).

To simplify the analysis, we consider InfoNCE and RINCE with a single negative pair ($K=1$):
\begin{align*}
    &\gL_{\textnormal{InfoNCE}}(\textbf{s}) = -\log (e^{s^+} / (e^{s^+} + e^{s^-} ));
    \\
    &\gL_{\textnormal{RINCE}}^{\lambda, q}(\textbf{s}) = \frac{-e^{q \cdot s^+}}{q} + \frac{(\lambda \cdot (e^{s^+} + e^{s^-}))^q}{q}.
\end{align*}

\begin{figure}[!tb]
\begin{center}   \includegraphics[width=\linewidth]{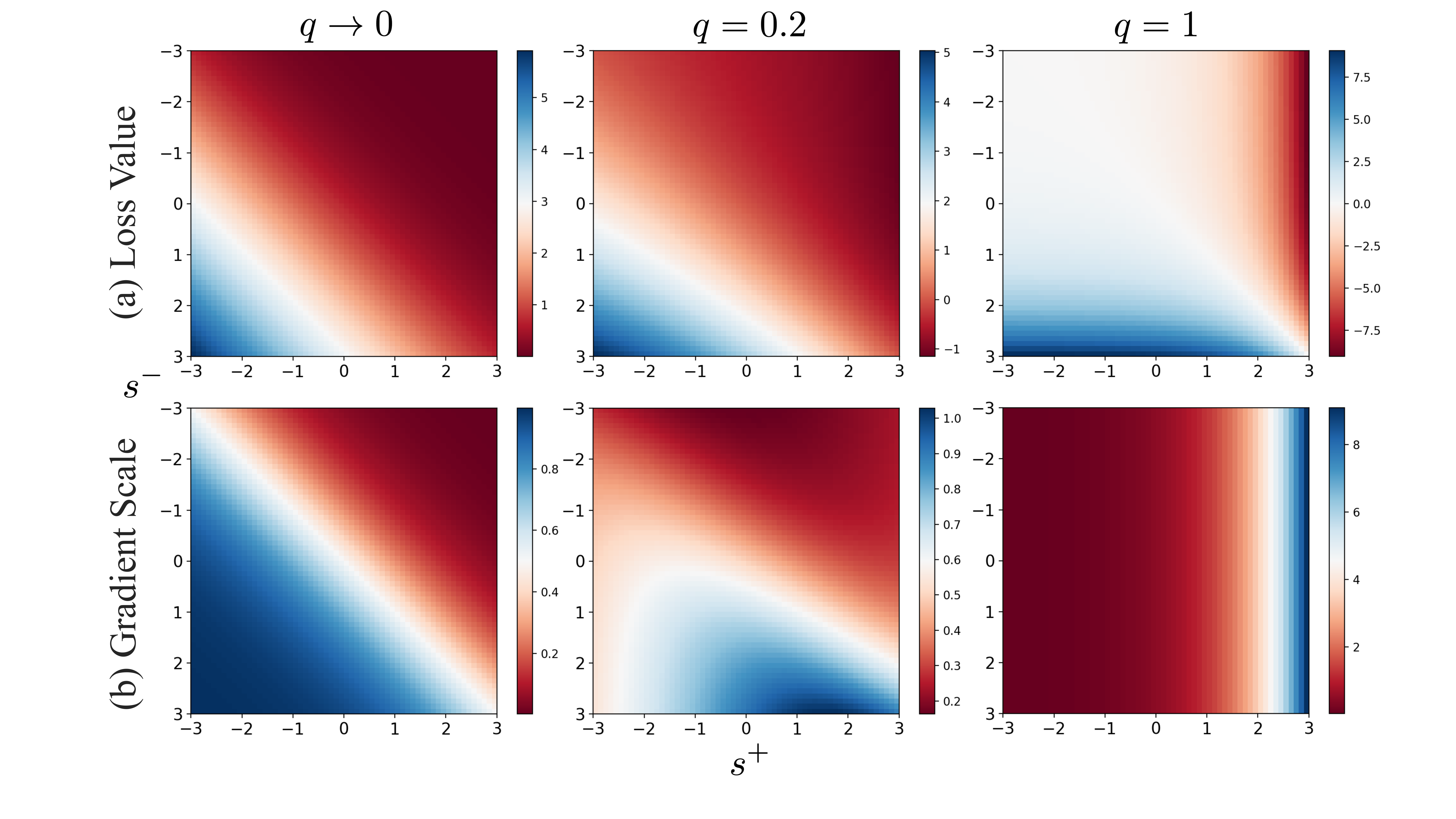}
\end{center}
\vspace{-4.5mm}
   \caption{\textbf{Loss Visualization.} We visualize the (a) loss value and the (b) gradient scale with respect to the positive score $s^+$ for different $q$ while setting $\lambda = 0.5$. The gradient scale of InfoNCE ($q \rightarrow 0$) is larger when the positive score is smaller (hard positive pair). In contrast, for fully symmetric RINCE ($q=1$), the gradient is larger when positive score is large (easy positive pair).} 
\label{fig_gradient}
\vspace{-3.5mm}
\end{figure}

We visualize the loss and the scale of the gradients with respect to positive scores $s^+$ in Figure \ref{fig_gradient}. Although the loss values are different for each $q$, they follow the same principle: The loss achieves its minimum when the positive score $s^+$ is maximized and the negative score $s^-$ is minimized. 

The interesting bit lies in the gradients. The InfoNCE loss ($q \rightarrow 0$) places more emphasis on \emph{hard} positive pairs, i.e., the pairs with \emph{low} positive scores $s^+$ (the left-most part in the plot). In contrast, the fully symmetric RINCE loss ($q = 1$) places more weights on \emph{easy} positive pairs (the right-most part). 
Note that both $q \rightarrow 0$ and $q \rightarrow 1$ naturally perform hard negative mining; both their derivatives put exponentially more weights on hard negative pairs.

This reveals an implicit trade-off between exploration (convergence) and exploitation (robustness). When $q \rightarrow 0$, the loss performs hard positive mining, providing faster convergence in the noise-free setting. But in the presence of noise, exploration is harmful; it wrongly puts higher weights to false positive pairs because noisy samples tend to induce larger losses \cite{arpit2017closer, han2018co, yu2019does, morgado2021robust}, and this could hinder convergence. In contrast, when $q \rightarrow 1$, we perform easy positive mining. This provides robustness especially against false positives; but this is done at the cost of exploration with clean hard positives. An important aspect here is that RINCE does not require an explicit form of noise estimator: the scores $s^+$ and $s^-$, and the relationship between the two (which is what the loss function measures) act as noise estimates. In practice, we set $q \in [0.1, 0.5]$ to strike the balance between exploration and exploitation.

\subsection{Theoretical Underpinnings}
\label{sec_theory}
Next, we provide an information-theoretic explanation on what makes RINCE robust against noisy views. In particular, we show that RINCE is a contrastive lower-bound of mutual information (MI) expressed in Wasserstein dependency measure (WDM) \citep{ozair2019wasserstein}, which provides superior robustness against sample noise compared to the Kullback–Leibler (KL) divergence thanks to strong geometric properties of the Wasserstein metric. We further show that, even in the presence of noise, RINCE is a lower bound of clean WDM, indicating its robustness against noisy views.

\vspace{-1em}\paragraph{Limitations of KL divergence in MI estimation.} Without loss of generality, let $f = g$ and consider $f = f' \circ \phi$, where $\phi$ is a representation encoder and $f'$ is a projection head~\citep{chen2020simple}. Also, let $P^\phi = \phi_\# P$ be the pushforward measure of $P$ with respect to $\phi$. It has been shown~\cite{poole2019variational,tian2020makes} that InfoNCE is a variational lower-bound of MI in the representation space expressed with KL-divergence:
\begin{align*}
  -\E \left[ \gL_{\textnormal{InfoNCE}}(\textbf{s}) \right] + \log(K) \leq I(\phi(X),\phi(V))& \\
  = D_{\textnormal{KL}}(P^\phi_{XV}, P^\phi_{X} P^\phi_{V})&.
\end{align*}
Intuitively, maximizing MI can be interpreted as maximizing the discrepancy between positive and negative pairs. However, prior works~\cite{mcallester2020formal,ozair2019wasserstein} have identified theoretical limitations of maximizing MI using the KL divergence: Because KL divergence is not a metric, it is sensitive to small differences in data samples regardless of the geometry of the underlying data distributions. Therefore, the encoder $\phi$ can capture limited information shared between $X$ and $V$ as long as the differences are sufficient to maximize the KL divergence. Note that this can be especially detrimental in the presence of noisy views, as the learner can quickly settle on spurious correlations in false positive pairs due to the absence of the actual shared information. 

\vspace{-1em}\paragraph{RINCE is a lower bound of WDM.} We now establish RINCE as a lower bound of WDM~\cite{ozair2019wasserstein}, which is proposed as a replacement for the KL divergence in MI estimation. 

WDM is based on the Wasserstein distance, a distance metric between probability distributions defined via an optimal transport cost. Letting $\mu$ and $\nu \in \textnormal{Prob}(\sR^d \times \sR^d)$ be two probability measures, we define the Wasserstein-$1$ distance with a Euclidean cost function as
\begin{align*}
    \gW(\mu, \nu) = \inf_{\pi \in \Pi(\mu, \nu)}  \E_{\substack{(X,V) \\ (X', V')} \sim \pi} \left[ \left\|X-X' \right\| + \left\|V-V' \right\| \right]
\end{align*}
where $\Pi(\mu, \nu)$ denotes the set of measure couplings whose marginals are $\mu$ and $\nu$, respectively. By virtue of symmetry when $q=1$, if $\lambda > 1 / (K+1)$, the Kantorovich-Rubinstein duality \citep{hormander2006grundlehren} implies that (full theorem in Appendix~\ref{sec_wmd_proof}):
\begin{align}
  -\E \left[\gL_{\textnormal{RINCE}}^{\lambda, q=1}(\textbf{s})\right] \leq L \cdot I_\gW(\phi(X),\phi(V))& \nonumber \\
  := L \cdot \gW(P_{XV}^\phi, P_{X}^\phi P_{V}^\phi)&,
\label{eq_iw}
\end{align}
where $I_\gW(\phi(X),\phi(V))$ is the WDM defined in \citep{ozair2019wasserstein} and $L$ is a constant that depends on $t, \lambda$, and the Lispchitz constant of the projection head $f$. Note that we are not aware of any work that showed it is possible to establish a similar bound with WDM for the InfoNCE loss.

This provides another explanation of what makes RINCE robust against noisy views. Unlike InfoNCE which maximizes the KL divergence, optimizing RINCE is equivalent to maximizing the WDM with a Lipschitz function. Equipped with a proper metric, this allows RINCE to measure the divergence between two distributions $P_{XV}^\phi$ and $P_{X}^\phi P_{V}^\phi$ without being overly sensitive to individual sample noise, as long as the noise does not alter the geometry of the  distributions. This also allows the encoder $\phi$ to learn more complete representations, as maximizing the Wasserstein distance requires the encoder to not only model the density ratio between the two distributions but also the optimal cost of transporting one distribution to another. 

\vspace{-1em}\paragraph{RINCE is still a lower bound of WDM even with noise.}
Finally, we show that RINCE still maximizes the noise-less WDM under additive noise, corroborating the robustness of RINCE. Let's consider a simple mixture noise model:
\begin{align*}
    P_{XV}^\eta = (1 - \eta) P_{XV} +  \eta P_{X}P_{V},
\end{align*}
where $\eta$ is the noise rate and the noisy joint distribution $P_{XV}^\eta$ is a weighted sum between the noise-less positive distribution $P_{XV}$ and negative distribution $P_X P_V$. Note that the marginals of $P_{XV}^\eta$ are still $P_X$ and $P_V$ by construction. The intuition behind mixture noise model is that when we draw positive pairs from $P_{XV}^\eta$, we obtain false positives from $P_X P_V$ with probability $\eta$. Via the symmetry of the contrastive loss, we can extend bound (\ref{eq_iw}) as follows (proof in Appendix \ref{sec_noisy_wdm}):
\begin{align*}
-\E_{P_{XV}^\eta} \left[\gL_{\textnormal{RINCE}}^{\lambda, q=1}(\textbf{s})\right] \leq (1-\eta) \cdot L \cdot I_\gW(\phi(X),\phi(V)).
\end{align*}
Comparing to the bound (\ref{eq_iw}), the right hand side is rewieghted with $(1-\eta)$. This implies that minimizing RINCE with noisy views still maximizes a lower bound of noise-less WDM. Despite the simplicity of the analysis, it intuitively relates dependency measures and the noisy views with interpretable bounds. It would be an interesting future direction to extend the analysis to more complicated noise models, e.g., $P_{XV}^\eta = (1 - \eta) P_{XV} +  \eta Q_{XV}$, where $Q$ is an unknown perturbation on positive distribution.

\section{Experiments}
\label{sec_exp}

We evaluate RINCE on various contrastive learning scenarios involving images (CIFAR-10 \citep{krizhevsky2009learning}, ImageNet \citep{deng2009imagenet}), videos (ACAV100M \citep{lee2021acav100m}, Kinetics400 \citep{kay2017kinetics}) and graphs (TUDataset \citep{morris2020tudataset}). Empirically, we find that RINCE is insensitive to the choice of $\lambda$; we simply set $\lambda = 0.01$ for all vision experiments and $\lambda = 0.025$ for graph experiments. 

\begin{figure}[tp]
\begin{center}   \includegraphics[width=\linewidth]{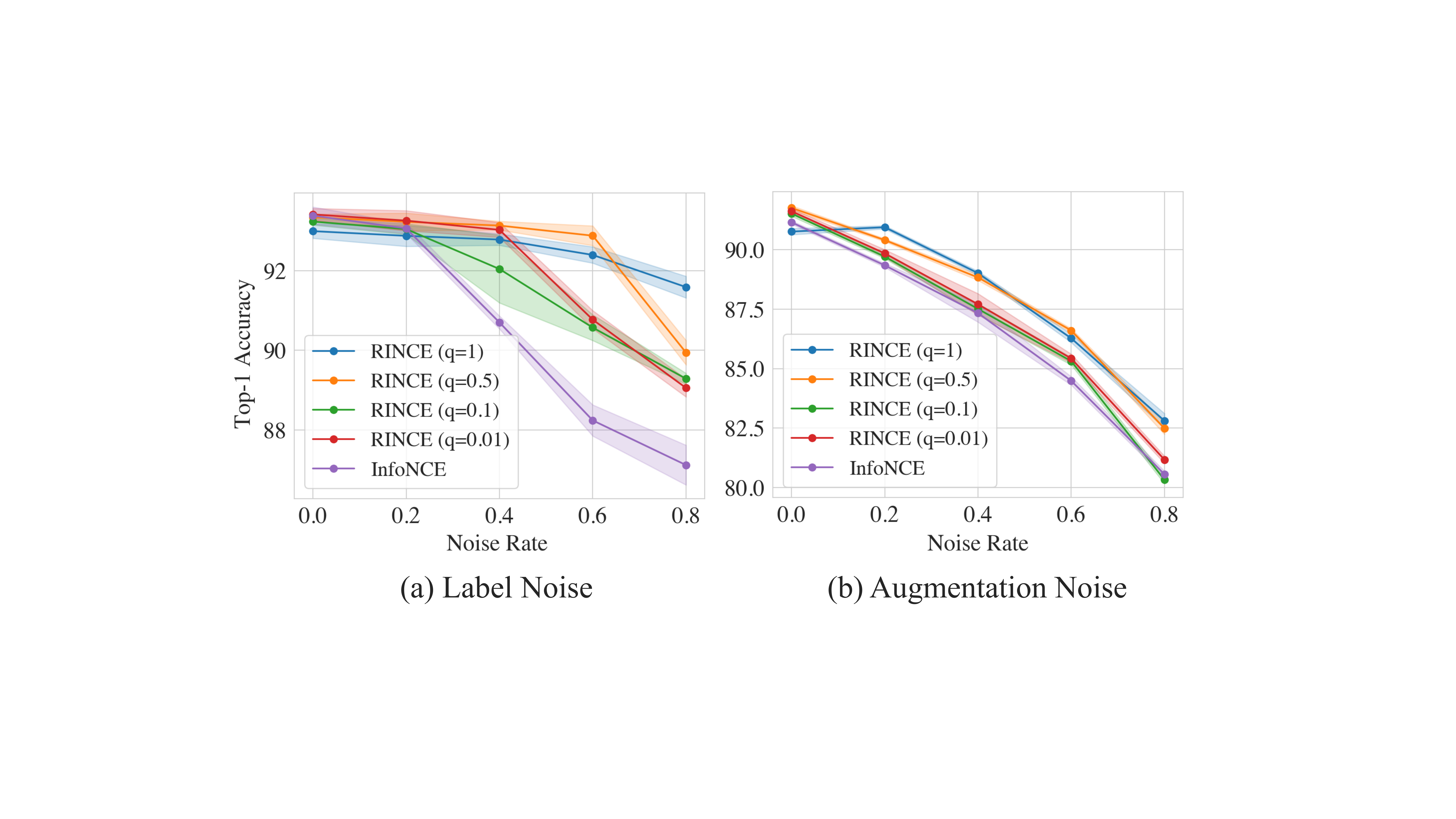}
\end{center}
\vspace{-4mm}
   \caption{\textbf{Noisy CIFAR-10.} We show the top1 accuracy of RINCE with different values of $q$ across different noise rate $\eta$. Large $q$ ($q = 0.5, 1$) leads to better robustness, while smaller $q$ ($q = 0.01$) performs similar to InfoNCE ($q \rightarrow 0$). }
\label{fig_cifar}
\vspace{-2.0mm}
\end{figure}
\begin{figure}[tp]
\begin{center}   \includegraphics[width=0.9\linewidth]{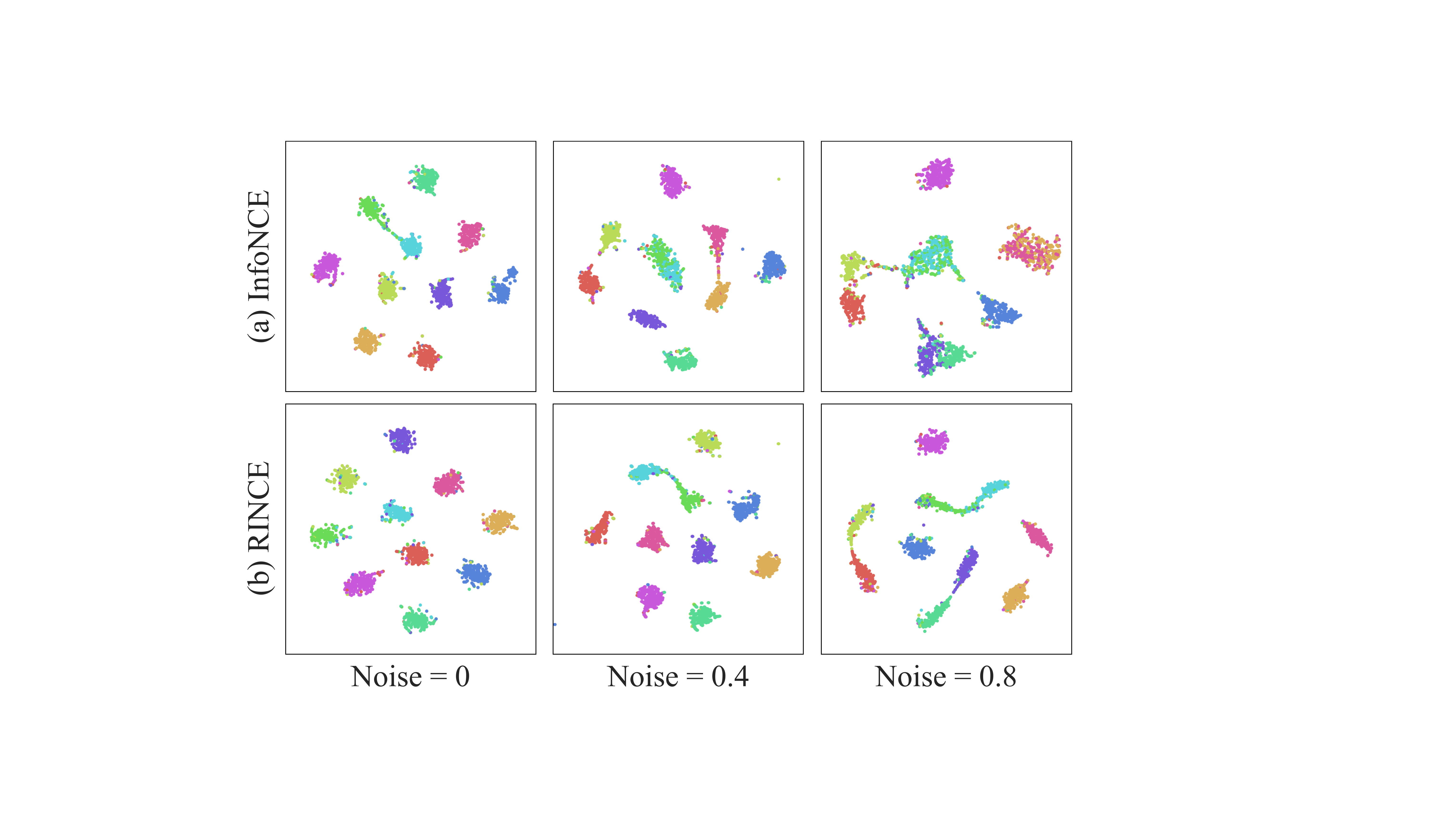}
\end{center}
\vspace{-3.5mm}
   \caption{\textbf{t-SNE Visualization on CIFAR-10 with label noise.} Colors indicate classes. RINCE leads to better class-wise separation than the InfoNCE loss in both noise-less and noisy cases.} 
\label{fig_tsne}
\vspace{-3.0mm}
\end{figure}

\subsection{Noisy CIFAR-10}

We begin with controlled experiments on CIFAR-10 to verify the robustness of RINCE against synthetic noise by controlling the noise rate $\eta$. We consider two noise types:

\textbf{Label noise:} We start with the case of supervised contrastive learning~\cite{khosla2020supervised} where positive pairs are different images of the same label. This allows us to control noise in the traditional sense, i.e., learning with noisy labels. Similar to \citep{zhang2018generalized}, we flip the true labels to semantically related ones, e.g., CAT $\leftrightarrow$ DOG with probability $\eta/2$. This is commonly referred to as class-dependent noise~\citep{han2018co, zhang2018generalized, wang2019symmetric}. 

\textbf{Augmentation noise:} We consider the self-supervised learning scenario and vary the crop size during data augmentation similar to \cite{tian2020makes}, i.e., after applying all the transformations as in SimCLR~\citep{chen2020simple}, images are further cropped into $1/5$ of their original size with probability $\eta$. This effectively controls the noise rate as cropped patches will most likely to be too small to contain any shared information.

Figure \ref{fig_cifar} shows the results of SimCLR trained with InfoNCE and RINCE with different choices of $q$ and $\lambda$. When the augmentation noise is present, e.g., $\eta = 0.4$, the accuracy of InfoNCE drops from $91.14\%$ to $87.33\%$. In contrast, the robustness of RINCE is enhanced by increasing $q$, achieving $89.01\%$ when $q = 1.0$,. InfoNCE also fails to address label noise and suffers from significant performance drop ($93.38\% \rightarrow 87.11\%$ when $\eta=0.8$). In comparison, RINCE retains the performance even when the noise rate is large ($91.59\%$ for $q=1.0$). In both cases, reducing the value of $q$ makes the performance of RINCE closer to InfoNCE, verifying our analysis in Lemma \ref{lem_asym}.

Figure \ref{fig_tsne} shows t-SNE visualization~\citep{van2008visualizing} of representations learned with InfoNCE and RINCE ($q = 1.0$) under different label noise. As the noise rate increases, representations of different classes start to tangle up for InfoNCE, while RINCE still achieves decent class-wise separation.

\begin{table}[tp]
    \small
      \centering
        \begin{tabularx}{0.45\textwidth}{r|c|cc}
            \toprule
            Method & $\Delta$ to SimCLR~\cite{chen2020simple} & Top 1 & Top 5  \\
            \midrule
            \midrule
            Supervised~\cite{he2016deep} & N/A & 76.5 & -
            \\
            \midrule
            SimSiam~\cite{chen2021exploring} & No negative pairs & 71.3 & - 
            \\
            BYOL~\cite{grill2020bootstrap} & No negative pairs & 74.3 & 91.6  
            \\
            Barlow Twins~\cite{zbontar2021barlow} & Redundancy reduction & 73.2 & 91.0
            \\
            SwAV~\cite{caron2020unsupervised} & Cluster discrimination & 75.3 & -
            \\
            \midrule
            \midrule    
            SimCLR~\cite{chen2020simple} & None & 69.3 & 89.0
            \\
            +RINCE (Ours) & Symmetry controller $q$ & \textbf{70.0} & \textbf{89.8}
            \\
            \midrule
            MoCo~\cite{he2020momentum} & Momentum encoder & 60.6 & -  
            \\
            MoCov2~\cite{chen2020improved} & Momentum encoder & 71.1 & 90.1 
            \\
            MoCov3~\cite{chen2021empirical} & Momentum encoder & 73.8 & -
            \\
            +RINCE (Ours) & Symmetry controller $q$ & \textbf{74.2} & \textbf{91.8} 
            \\
            \bottomrule
            \multicolumn{4}{l}{
            \footnotesize \it 
            }
        \end{tabularx}
    \vspace{-.5em}
    \caption{\textbf{Linear Evaluation on ImageNet.} All the methods use ResNet-50 \cite{he2016deep}
as backbone architecture with 24M parameters. Note that RINCE subsumes InfoNCE when $q \rightarrow 0$.
}
\label{table_imgnet}
 \vspace{-3.5mm}
\end{table}

\subsection{Image Contrastive Learning}
We verify our approach on the well-established ImageNet benchmark~\cite{deng2009imagenet}. We adopt the same training protocol and hyperparameter settings of SimCLR~\citep{chen2020simple} and MoCov3~\citep{chen2021empirical} and simply replace the InfoNCE with our RINCE loss ($q = 0.1$ and $q=0.6$, respectively) as shown in Figure~\ref{fig_objective_code}. Table \ref{table_imgnet} shows that RINCE improves InfoNCE (SimCLR and MoCov3) by a non-trivial margin. We also include results from the SOTA baselines, where they improve SimCLR by introducing dynamic dictionary plus momentum encoder (MoCo-v1/v2/v3~\cite{he2020momentum, chen2020improved, chen2021empirical}), removing negative pairs plus the stop-gradient trick (SimSiam \cite{chen2021exploring}, BYOL \cite{grill2020bootstrap}), or online cluster assignment (SwAV \cite{caron2020unsupervised}). In comparison, our work is orthogonal to the recent developments, and the existing tricks can be applied along with RINCE.

Figure \ref{fig_imgnet} shows the positive pairs from SimCLR augmentations and the corresponding positive scores $s^+ = f(x)^T g(v)$ output by trained RINCE model. Examples with lower positive scores contain pairs that is less informative to each other, while semantically meaningful pairs often have higher scores. This implies that positive scores are good noise detectors, and down-weighting the samples with lower positive score brings robustness during training, verifying our analysis in section \ref{sec_tradeoff}.

\subsection{Video Contrastive Learning}
We examine our approach in the audio-visual learning scenario using two video datasets: Kinetics400~\citep{kay2017kinetics} and ACAV100M~\citep{lee2021acav100m}. Here, we find that simple \emph{$q$-warmup} improves the stability of RINCE, i.e., $q$ starts at $0.01$ and linearly increases to $0.4$ until the last epoch. We apply this to all RINCE models in this section. As we show below, RINCE outperforms SOTA noise-robust contrastive methods~\cite{morgado2021audio,morgado2021robust} on Kinetics400, while also providing scalability and computational efficiency compared to InfoNCE. 

\newcommand{\ActionSOTAbyDB}{
\begin{tabular}{r|cccc}
\toprule
\bf Method & \bf \begin{tabular}{c}\bf Backbone\end{tabular} & \bf \begin{tabular}{c}Finetune \\Input Size\end{tabular} & \bf HMDB & \bf UCF  \\
\hline\hline
 3D-RotNet \citep{jing2018self} & R3D-18 & $16\!\times\!112^2$ & 33.7 & 62.9  \\
 ClipOrder \citep{xu2019self}   & R3D-18 & $16\!\times\!112^2$ & 30.9 & 72.4 \\
 DPC \citep{han2019video}              & R3D-18 & $25\!\times\!128^2$  & 35.7 & 75.7 \\
 CBT \citep{sun2019learning}                & S3D & $16\!\times\!112^2$ & 44.6 & 79.5  \\
 AVTS \citep{korbar2018cooperative}        & MC3-18 & $25\!\times\!224^2$ & 56.9 & 85.8 \\
 SeLaVi \citep{asano2020labelling}               & R(2+1)D-18 & $32\!\times\!112^2$  & 47.1 & 83.1\\
 XDC \citep{alwassel2019self} & R(2+1)D-18 & $32\!\times\!224^2$  & 52.6 & 86.8 \\
 Robust-xID \citep{morgado2021robust}   & R(2+1)D-18 & $32\!\times\!224^2$ & 55.0 & 85.6  \\
 Cross-AVID \citep{morgado2021audio}   & R(2+1)D-18 & $32\!\times\!224^2$ & 59.9 & 86.9 \\
 AVID+CMA \citep{morgado2021audio}      & R(2+1)D-18 & $32\!\times\!224^2$  & 60.8 & 87.5\\
\midrule
InfoNCE (Ours)   & R(2+1)D-18 & $32\!\times\!224^2$ & 57.8  & 88.6 \\
 RINCE (Ours)     & R(2+1)D-18 & $32\!\times\!224^2$  & \textbf{61.6} & \textbf{88.8} \\
\midrule
\gray{GDT} \citep{patrick2021compositions}        & \gray{R(2+1)D-18} & \gray{$30\!\times\!112^2$} & \gray{62.3$^\ast$} & \gray{90.9$^\ast$}  \\
\bottomrule 
\multicolumn{5}{l}{\footnotesize \it ${}^*$Based on an advanced hierarchical data augmentation during pretraining.}
\end{tabular}
}
\begin{table}[t!]
    \centering
    \resizebox{\columnwidth}{!}{\ActionSOTAbyDB}
    \caption{\textbf{Kinetics400-pretrained performance on UCF101 and HMDB51} (top-1 accuracy). Ours use the same data augmentation approach as Cross-AVID and AVID+CMA, while GDT uses an advanced hierarchical sampling process.}
\label{table_kinetic}
\vspace{-3.5mm}
\end{table}

\vspace{-1em}\paragraph{Kinetics400}

For fair comparison to SOTA, we follow the same experimental protocol and hyperparameter settings of \cite{morgado2021audio} and simply replace their loss functions with InfoNCE and RINCE loss as shown in Figure~\ref{fig_objective_code}. We use the same network architecture, i.e., 18-layer R(2+1)D video encoder~\citep{tran2018closer}, 9-layer VGG-like audio encoder, and 3-layer MLP projection head producing 128-dim embeddings. We use the ADAM optimizer \citep{kingma2014adam} for 400 epochs with 4,096 batch size, 1e-4 learning rate and 1e-5 weight decay. The pretrained encoders are finetuned on UCF-101 \citep{soomro2012ucf101} and HMDB-51 \citep{kuehne2011hmdb} with clips composed of 32 frames of size 224 $\times$ 224. We defer the full experimental details to Appendix \ref{sec_exp_detail}. 

Table \ref{table_kinetic} shows that RINCE outperforms most of the baseline approaches, including Robust-xID~\cite{morgado2021robust} and AVID+CMA~\cite{morgado2021audio} which are recent InfoNCE-based SOTA methods proposed to address the noisy view issues in audio-visual contrastive learning. Considering the only change required is the simple replacement of the InfoNCE with our RINCE loss, the results clearly show the effectiveness of our approach. The simplicity means we can easily apply RINCE to a variety of InfoNCE-based approaches, such as GDT~\cite{patrick2021compositions} that uses advanced data augmentation mechanisms to achieve SOTA results.

\vspace{-1em}\paragraph{ACAV100M}
We conduct an in-depth analysis of RINCE on ACAV100M~\citep{lee2021acav100m}, a recent large-scale video dataset for self-supervised learning. Compared to Kinetics400 which is limited to human actions, ACAV100M contains videos ``in-the-wild'' exhibiting a wide variety of audio-visual patterns. The unconstrained nature of the dataset makes it a good benchmark to investigate the robustness of RINCE to various types of real-world noise, e.g., background music, overdubbed audio, studio narrations, etc.

\begin{figure}[!tb]
\begin{center}   \includegraphics[width=0.9\linewidth]{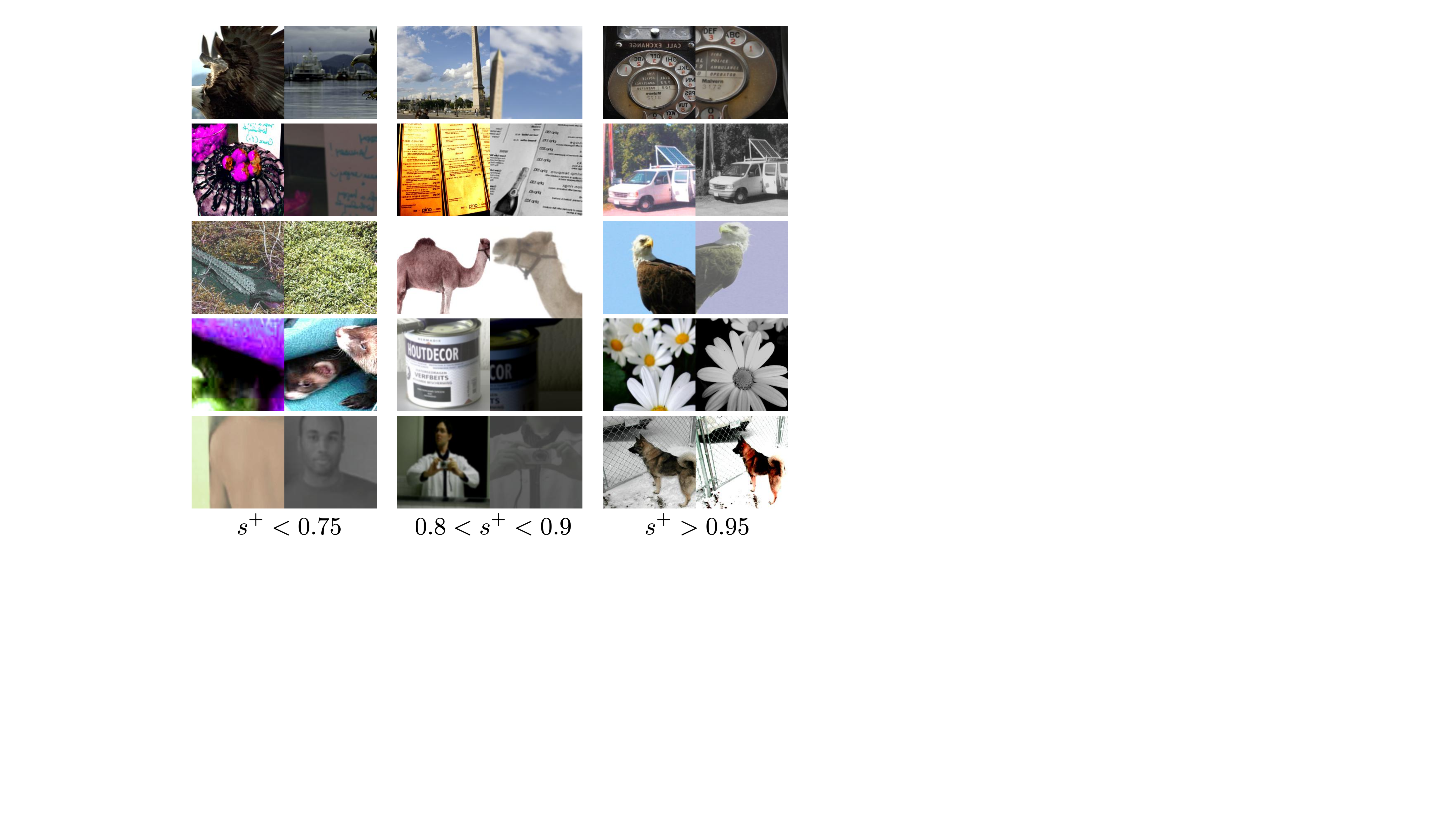}
\end{center}
\vspace{-3.5mm}
   \caption{\textbf{Positive pairs and their scores.} The positive scores $s^+ \in [-1, 1]$ are output by the trained RINCE model (temperature $=$ 1). Pairs that have lower scores are visually noisy, while informative pairs often have higher scores.}
\label{fig_imgnet}
\vspace{-4.0mm}
\end{figure}

We focus on evaluating the (a) scalability and (b) convergence rate of RINCE, thereby answering the question: \emph{Will it retrain its edge over InfoNCE (a) even in the large-scale regime and (b) with a longer training time?} We follow the same experimental setup as described above, but reduce the batch size to 512 and report the results only on the first split of UCF-101 to make our experiments tractable.

Figure \ref{fig_acav} (a) shows the top-1 accuracy of RINCE and InfoNCE across different data scales and training epochs. RINCE outperforms InfoNCE by a large margin at every data scale. In terms of the convergence rate, RINCE is comparable to or even outperforms fully-trained (200 epochs) InfoNCE models with only 100 or fewer epochs. Figure \ref{fig_acav} (b) gives a closer look at the convergence at 50K and 200K scales. Interestingly, InfoNCE saturates and even degenerates after epoch 150, while RINCE keeps improving. This verifies our analysis in section \ref{sec_tradeoff}: InfoNCE can overfit noisy samples due to its exploration property, while RINCE downweights them and continue to obtain the learning signal from clean ones, achieving robustness against noise.

\begin{figure}[tp]
\begin{center}   \includegraphics[width=\linewidth]{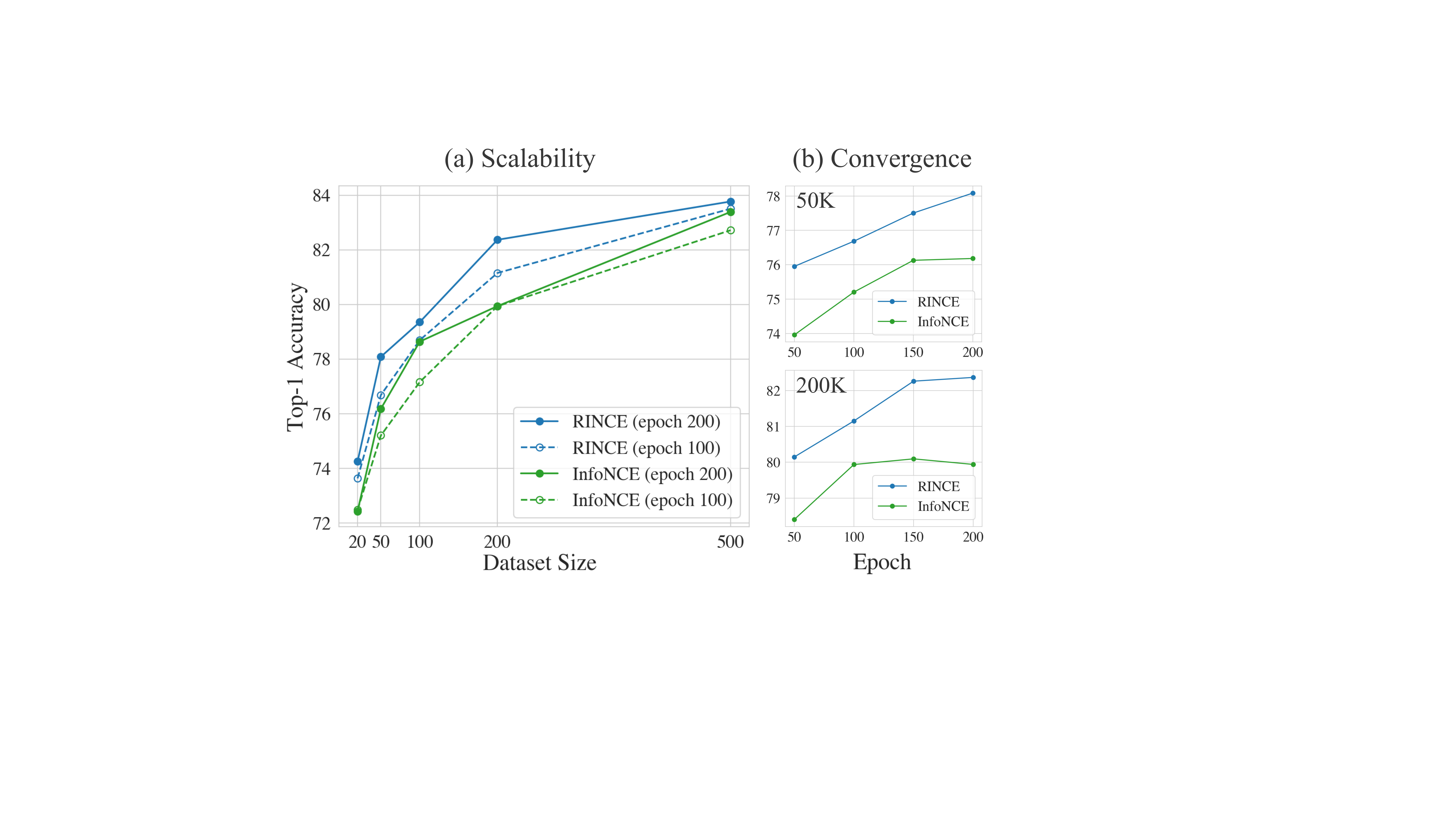}
\end{center}
\vspace{-4.5mm}
   \caption{
   \textbf{RINCE outperforms InfoNCE with fewer epochs across different scales} The results are based on ACAV100M-pretrained models transferred to UCF-101.
   }
\label{fig_acav}
\vspace{-3mm}
\end{figure}

\subsection{Graph Contrastive Learning}
To see whether the modality-agnostic nature of RINCE applies beyond image and video data, we examine our approach on TUDataset \citep{morris2020tudataset}, a popular benchmark suite for graph inference on molecules (BZR, NCI1), bioinformatics (PROTEINS), and social network (RDT-B, IMDB-B). Unlike vision datasets, data augmentation for graphs requires careful engineering with domain knowledge, limiting the applicability of InfoNCE-type contrastive objectives. 

For fair comparison, we follow the protocol of \citep{you2020graph} and train graph isomorphism networks~\citep{xu2018powerful} with four types of data augmentation: node dropout, edge perturbation, attribute masking, and subgraph sampling. We train models using ADAM~\citep{kingma2014adam} for 20 epochs with a learning rate 0.01 and report mean and standard deviation over 5 independent trials. We set $q = 0.1$ for all the experiments in this section. 

Table~\ref{table_graph} shows that RINCE outperforms three SOTA InfoNCE-based contrastive methods, GraphCL and JOAO /JOAOv2, setting the new records on all four datasets. GraphCL applies different augmentations for different datasets, while JOAO/JOAOv2 require solving bi-level optimization to choose optimal augmentation per dataset. In contrast, we apply the same augmentation across all four datasets and achieve competitive performance, demonstrating its generality and robustness. In Figure~\ref{fig_graph}, we control perturbation rate by applying three augmentation types (node dropout, edge perturbation, attribute masking) to different \% of nodes/edges. We show results on two datasets most sensitive to augmentation. Again, RINCE consistently outperform InfoNCE and has relatively smaller variances when the noise rate increases.

\begin{table}[tp]
     \centering
     \footnotesize
    \begin{tabularx}{0.47\textwidth}{r|*{5}{c}}
    \toprule
    \bf Methods & \bf RDT-B & \bf NCI1 & \bf PROTEINS & \bf DD &   \\
    \hline
    \hline
    node2vec \citep{grover2016node2vec} & -& 54.9$\pm$1.6 & 57.5$\pm$3.6 & -   \\
    sub2vec \citep{adhikari2018sub2vec} &71.5$\pm$0.4& 52.8$\pm$1.5 & 53.0$\pm$5.6&  -   \\
    graph2vec \citep{narayanan2017graph2vec} & 75.8$\pm$1.0& 73.2$\pm$1.8 & 73.3$\pm$2.1 & -  \\
    InfoGraph \citep{velivckovic2018deep} & 82.5$\pm$1.4& 76.2$\pm$1.1 & 74.4$\pm$0.3 & 72.9$\pm$1.8     \\
    GraphCL \citep{you2020graph} &89.5$\pm$0.8& 77.9$\pm$0.4 & 74.4$\pm$0.5  & 78.6$\pm$0.4  \\
    JOAO \citep{you2021graph} &85.3$\pm$1.4 & 78.1$\pm$0.5 & 74.6$\pm$0.4 & 77.3$\pm$0.5 \\
    JOAOv2 \citep{you2021graph} &86.4$\pm$1.5& 78.4$\pm$0.5 & 74.1$\pm$1.1  & 77.4$\pm$1.2  \\
    \midrule
    InfoNCE$^\ast$ (Ours)& 89.9$\pm$0.4 & 78.2$\pm$0.8 & 74.4$\pm$0.5 & 78.6$\pm$0.8 
    \\
    RINCE (Ours) & \bf 90.9$\pm$0.6 & \bf 78.6$\pm$0.4 & \bf 74.7$\pm$0.8 & \bf 78.7$\pm$0.4  \\
    \bottomrule
    \multicolumn{6}{l}{
            \footnotesize \it ${}^*$GraphCL \citep{you2020graph} but uses the same data augmentation as RINCE.
    }
        \end{tabularx}
    \vspace{-1mm}
    \caption{\textbf{Self-supervised representation learning on TUDataset}: The baseline results are excerpted from the published papers.
}
\label{table_graph}
\end{table}

\begin{figure}[tp]
\begin{center}  
\vspace{-1mm}
\includegraphics[width=\linewidth]{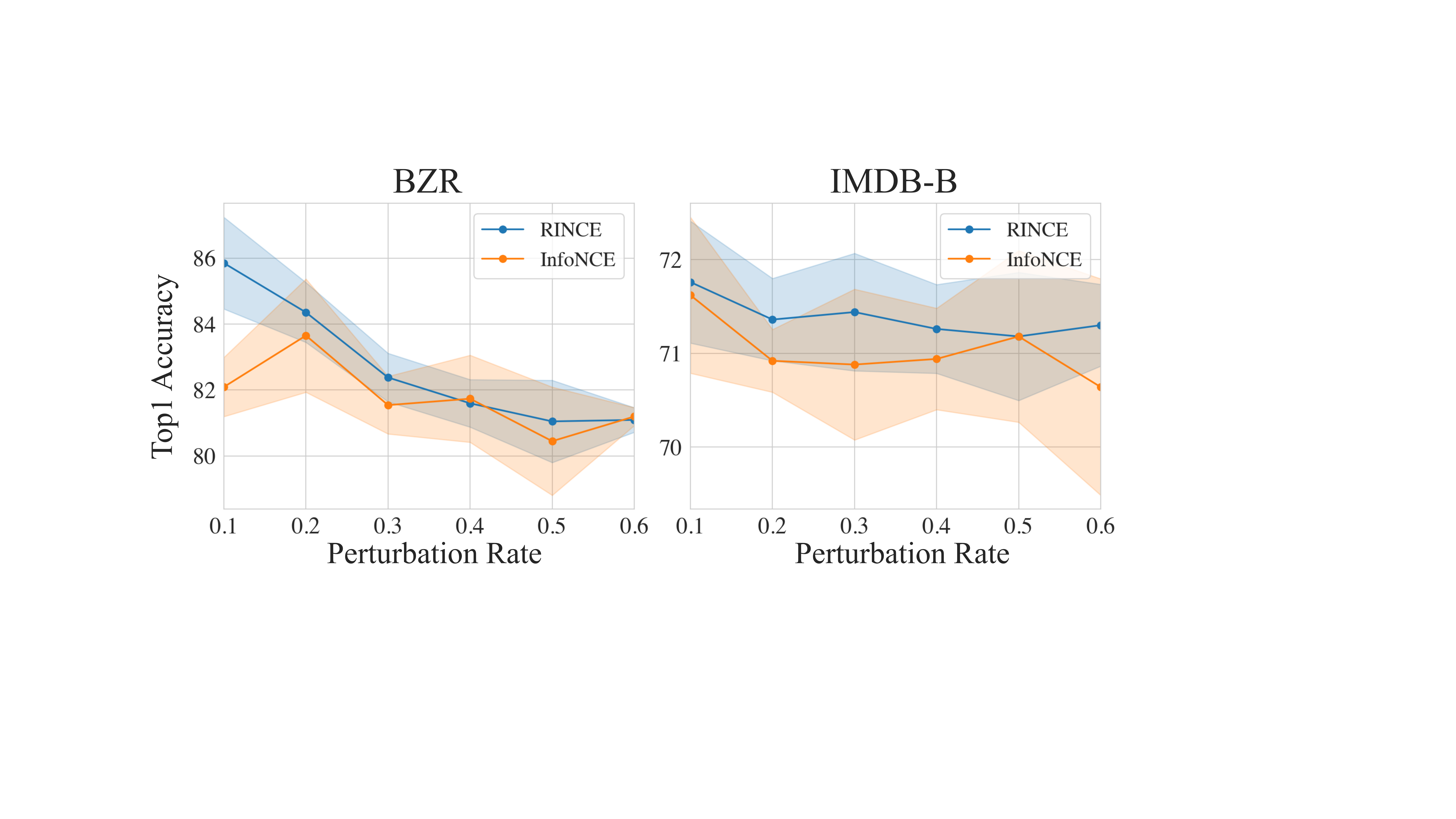}
\end{center}
\vspace{-3mm}
   \caption{\textbf{Performance v.s. Perturbation  Rate:} We increase the perturbation rate of node dropping, edge perturbation, and attribute masking from 10\% to 60\%. RINCE outperforms InfoNCE in terms of accuracy and variance when perturbation enhances.}
\label{fig_graph}
\vspace{-4mm}
\end{figure}

\vspace{-2mm}
\section{Conclusion}
\vspace{-2mm}

We presented \emph{Robust InfoNCE (RINCE)} as a simple drop-in replacement for the InfoNCE loss in contrastive learning. Despite its simplicity, it comes with strong theoretical justifications and guarantees against noisy views. Empirically, we provided extensive results across image, video, and graph contrastive learning scenarios demonstrating its robustness against a variety of realistic noise patterns. 

\vspace{-3mm}
\paragraph{Acknowledgements}
This work was in part supported by NSF Convergence Award 6944221 and ONR MURI 6942251.

{\small
\bibliographystyle{IEEEtranN}
\bibliography{egbib}
}

\clearpage
\onecolumn 
\appendix

\section{Theory and Proofs}
\label{sec_proof}
\subsection{Proof of Lemma \ref{lem_asym}: From RINCE to InfoNCE}
We show that RINCE becomes asymptotically equivalent to InfoNCE when $q \rightarrow 0$. In particular, we prove the convergence of RINCE and its derivative in the limit of $q \rightarrow 0$. 

\begin{proof}
We first prove the convergence in the function space with the L’Hôpital’s rule:
\begin{align*}
    &\lim_{q \rightarrow 0} \gL^{\lambda, q}_{\textnormal{RINCE}}(\textbf{s}) 
    \\
    &= \lim_{q \rightarrow 0} \frac{-e^{q \cdot s^+}}{q} + \frac{\left(\lambda \cdot (e^{s^+} + \sum_{i=1}^K e^{s_i^-})\right)^q}{q}
    \\
    &= \lim_{q \rightarrow 0} \frac{1 -e^{q \cdot s^+}}{q} + \frac{-1 + \left(\lambda \cdot (e^{s^+} + \sum_{i=1}^K e^{s_i^-})\right)^q}{q}
    \\
    &= \lim_{q \rightarrow 0} \frac{1 -e^{q \cdot s^+}}{q} + \lim_{q \rightarrow 0} \frac{-1 + \left(\lambda \cdot (e^{s^+} + \sum_{i=1}^K e^{s_i^-}) \right)^q}{q}
    \\
    &=-\log(e^{s^+}) + \log\left(\lambda(e^{s^+} + \sum_{i=1}^K e^{s_i^-}) \right) \tag{L’Hôpital’s rule}
    \\
    &= -\log \frac{e^{s^+}}{\lambda \left(e^{s^+} + \sum_{i=1}^K e^{s_i^-} \right)} 
    \\
    &= \gL_{\textnormal{InfoNCE}}(\textbf{s})  + \log(\lambda).
\end{align*}

To prove the convergence in its derivative, we analyze the derivative with respect to the positive score $s^+$ and the negative score $s_i^-$. We begin with RINCE:
\begin{align*}
    \textnormal{(positive score)} \quad\quad &\lim_{q \rightarrow 0} \frac{\partial}{\partial s^+} \gL^{\lambda, q}_{\textnormal{RINCE}}(\textbf{s}) 
    \\
    =& \lim_{q \rightarrow 0} \frac{\partial}{\partial s^+} \frac{-e^{q \cdot s^+}}{q} +  \frac{\partial}{\partial s^+} \frac{\left(\lambda \cdot (e^{s^+} + \sum_{i=1}^K e^{s_i^-})\right)^q}{q}
    \\
    =& \lim_{q \rightarrow 0} -e^{q \cdot s^+} + (\lambda \cdot (e^{s^+} + \sum_{i=1}^K e^{s_i^-}))^{q-1} \cdot \lambda \cdot e^{s^+}
    \\
    =& -1 + \frac{e^{s^+}}{e^{s^+} + \sum_{i=1}^K e^{s_i^-}};
    \\
    \textnormal{(negative score)} \quad\quad &\lim_{q \rightarrow 0} \frac{\partial}{\partial s_i^-} \gL^{\lambda, q}_{\textnormal{RINCE}}(\textbf{s}) 
    \\
    =& \lim_{q \rightarrow 0} \frac{\partial}{\partial s_i^-} \frac{\left(\lambda \cdot (e^{s^+} + \sum_{i=1}^K e^{s_i^-})\right)^q}{q}
    \\
    =& \lim_{q \rightarrow 0} (\lambda \cdot (e^{s^+} + \sum_{i=1}^K e^{s_i^-}))^{q-1} \cdot \lambda \cdot e^{s_i^-}
    \\
    =& \frac{e^{s_i^-}}{e^{s^+} + \sum_{i=1}^K e^{s_i^-}}.
\end{align*}
We can see that the derivatives match the ones of InfoNCE
\begin{align*}
    \textnormal{(positive score)} \quad\quad &\frac{\partial}{\partial s^+} \gL_{\textnormal{InfoNCE}}(\textbf{s}) 
    \\
    &=\frac{\partial}{\partial s^+} - \log \frac{e^{s^+}}{e^{s^+} + \sum_{i=1}^K e^{s_i^-}}
    \\
    &= - \frac{e^{s^+}}{e^{s^+} + \sum_{i=1}^K e^{s_i^-}} \cdot \frac{(e^{s^+} + \sum_{i=1}^K e^{s_i^-}) \cdot e^{s^+} - e^{2s^+}}{(e^{s^+} + \sum_{i=1}^K e^{s_i^-})^2}
    \\
    &= -1 + \frac{e^{s^+}}{e^{s^+} + \sum_{i=1}^K e^{s_i^-}};
    \\
    \textnormal{(negative score)} \quad\quad &\frac{\partial}{\partial s_i^-} \gL_{\textnormal{InfoNCE}}(\textbf{s}) 
    \\
    &=-\frac{e^{s^+} + \sum_{i=1}^K e^{s_i^-}}{e^{s_i^-}} \cdot \frac{-e^{s^+} \cdot e^{s_i^-}}{(e^{s^+} + \sum_{i=1}^K e^{s_i^-})^2}
    \\
    &= \frac{e^{s_i^-}}{e^{s^+} + \sum_{i=1}^K e^{s_i^-}}.
\end{align*}

\end{proof}

\subsection{Noisy Risk Bound for Exponential Loss}
\label{sec_exp_bound}

We justify the robustness of RINCE when $q=1$ by extending \citet{ghosh2015making}'s theorem to the exponential loss. The proof technique can be applied to other bounded symmetric classification losses.

\begin{corollary}
Consider the setting of \citet{ghosh2015making} and the exponential loss function $\gL(s, y) = -ye^s$. Let $f_\eta^\ast = \arginf_{f \in \gF} R_{\gL}^\eta(f)$ be the minimizer of the noisy risk and $\epsilon = \inf_{f \in \gF} R_{\gL}(f)$ be the optimal risk. If $\eta_x \leq \eta_{\max} < 0.5 $ for all $x \in \gX$. If the prediction score is bounded by $s_{\max}$, we have $R(f_\eta^\ast) \leq (\epsilon + 2  \eta_{\max} e^{s_{\max}}) / (1 - 2 \eta_{\max})$.
\label{cor_noisy}
\end{corollary}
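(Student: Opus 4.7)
The obstacle to applying \citet{ghosh2015making}'s theorem directly is the non-negativity hypothesis: the exponential loss $\ell(s,y) = -y e^{s}$ is symmetric with constant $c=0$, but takes negative values whenever $y=+1$. My plan is to reduce to the non-negative case by an additive shift and then invoke a slightly strengthened version of the Ghosh argument that tracks the symmetric constant.

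First I would re-derive the Ghosh bound for a \emph{general} symmetric constant. For any symmetric loss with $\ell(s,1)+\ell(s,-1)=c$, substituting $\ell(f(x),-y) = c - \ell(f(x),y)$ gives $R_\ell^{\eta}(f) = \mathbb{E}[(1-2\eta_x)\ell(f(x),y)] + c\,\mathbb{E}[\eta_x]$. If additionally $\ell \geq 0$, one has the sandwich $(1-2\eta_{\max})\ell(f(x),y) \le (1-2\eta_x)\ell(f(x),y) \le \ell(f(x),y)$, and comparing $R^{\eta}(f_\eta^{\ast})$ with $R^{\eta}(f^{\ast})$ yields $R(f_\eta^{\ast}) \le \epsilon/(1-2\eta_{\max})$; the constant $c$ cancels on both sides.

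The second step is to shift. Since $|s|\le s_{\max}$ gives $|\ell(s,y)|\le e^{s_{\max}}$, define $\tilde\ell(s,y) := \ell(s,y) + e^{s_{\max}} \ge 0$. Then $\tilde\ell$ is still symmetric, now with constant $\tilde c = 2 e^{s_{\max}}$, and it differs from $\ell$ by an additive constant, so the clean and noisy minimizers are unchanged. The associated risks satisfy $\tilde R = R + e^{s_{\max}}$ and $\tilde\epsilon = \epsilon + e^{s_{\max}}$. Applying the generalized Ghosh bound from step one to $\tilde\ell$ yields $\tilde R(f_\eta^{\ast}) \le \tilde\epsilon/(1-2\eta_{\max})$. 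Rearranging,
\begin{equation*}
R(f_\eta^{\ast}) \;\le\; \frac{\epsilon + e^{s_{\max}}}{1-2\eta_{\max}} - e^{s_{\max}} \;=\; \frac{\epsilon + 2\eta_{\max}\,e^{s_{\max}}}{1-2\eta_{\max}},
\end{equation*}
which is the stated bound.

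The main obstacle is not conceptual but bookkeeping: one must make explicit that the inequality $(1-2\eta_x)\tilde\ell \le \tilde\ell$ used in the upper bound requires $\tilde\ell \ge 0$ rather than $\ell \ge 0$, which is precisely why the shift trick is needed. A secondary subtlety is verifying that the shift preserves arg-minimizers (trivial, since $\tilde R - R$ is a constant independent of $f$) and that the score bound $|s|\le s_{\max}$ applies uniformly over $\mathcal{F}$, so the same $e^{s_{\max}}$ works for every $f$ in both the clean and noisy risks. The same template extends to any bounded symmetric loss by shifting by its lower bound, which may be useful for justifying other members of the RINCE family.
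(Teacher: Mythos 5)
Your proposal is correct and follows essentially the same route as the paper's proof: the paper likewise shifts the exponential loss by $B = e^{s_{\max}}$ to obtain a non-negative symmetric loss with constant $2B$, runs the Ghosh decomposition $R^{\eta}_{\tilde{\gL}}(f)=\E[(1-2\eta_x)\tilde{\gL}(f(x),y_x)]+2B\,\E[\eta_x]$, and undoes the shift to arrive at $(\epsilon+2\eta_{\max}e^{s_{\max}})/(1-2\eta_{\max})$. The bookkeeping points you flag (non-negativity applying to the shifted loss, invariance of minimizers under the additive constant) are exactly the ones the paper's argument relies on.
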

\begin{proof}
Consider a binary classification loss with the following form:
\begin{align*}
    \tilde{\gL}_{x}(f(x), y) = B + \gL_{x}(f(x), y) =B - y \cdot e^{f(x)} \geq 0,
\end{align*}
where the prediction score $f(x)$ is bounded by $s_{\max} = \log(B)$. Note that the boundedness assumption holds for general representation learning on hypersphere, where the prediction score is the inner product between normalized feature vectors. Importantly, the loss satisfies 
\begin{align*}
    \tilde{\gL}(f(x), 1) +   \tilde{\gL}(f(x), -1) = 2B.
\end{align*}
By construction, the optimal risk takes the following value:
\begin{align*}
    \inf_{f \in \gF} R_{\tilde{\gL}}(f) = \inf_{f \in \gF} \E_{x \sim \mu} [\tilde{\gL}(f(x), y_x)] = \epsilon + B := \tilde{\epsilon},
\end{align*}
and $f^\ast = \arginf_{f \in \gF} R_{\tilde{\gL}}(f)$. Note that $f^\ast$ is also a minimizer w.r.t. the original loss $\gL$( $f^\ast = \arginf_{f \in \gF} R_{\gL}(f)$), as an additive constant will not change the optimum solutions. Expanding the noisy risk gives
\begin{align*}
    R_{\tilde{\gL}}^\eta(f) &= \E_{(x,y) \sim \mu} [(1 - \eta_{x})\tilde{\gL}(f(x), y_x) + \eta_{x} \tilde{\gL}(f(x), -y_x)]
    \\
    &= \E_{x \sim \mu} [(1 - \eta_{x})\tilde{\gL}(f(x), y_x) + \eta_{x} (2B - \tilde{\gL}(f(x), y_x))] \tag{Symmetry}
    \\
    &= \E_{x \sim \mu} [(1 - 2\eta_{x})\tilde{\gL}(f(x), y_x) ] + 2B \E_{x \sim \mu}[\eta_{x}] .
\end{align*}
Let $f_\eta^\ast = \arginf R_\gL^\eta(f_\eta^\ast) = \arginf R_{\tilde{\gL}}^\eta(f_\eta^\ast)$, we have
\begin{align*}
&R_{\tilde{\gL}}^\eta(f^\ast) -  R_{\tilde{\gL}}^\eta(f_\eta^\ast) = \E_{x \sim \mu} [(1 - 2\eta_{x})(\tilde{\gL}(f^\ast(x), y_x) - \tilde{\gL}(f_\eta^\ast(x), y_x)) ] \geq 0
\end{align*}
since $f_\eta^\ast$ is the minimizer of $R_{\tilde{\gL}}^\eta$, which implies that
\begin{align*}
    &E_{x \sim \mu} [(1 - 2\eta_{ x})\tilde{\gL}(f_\eta^\ast(x), y_x)  ] 
\leq E_{x \sim \mu} [(1 - 2\eta_{ x})\tilde{\gL}(f^\ast(x), y_x) ] \leq \tilde{\epsilon},
\end{align*}
since $0 < 1 - 2\eta_{x} \leq 1$ by assumption. Let $\eta_{max} = \sup_{x \in \gX} \eta_{x}$, we have
\begin{align*}
    (1 - 2 \eta_{\max}) E_{x \sim \mu} [\tilde{\gL}(f_\eta^\ast(x), y_x)  ] \leq \epsilon,
\end{align*}
since the loss is non-negative, which implies
\begin{align*}
    R_{\tilde{\gL}}(f_\eta^\ast) \leq \frac{\tilde{\epsilon}}{1 - 2 \eta_{\max}}.
\end{align*}

Finally, we recover the original exponential loss without the additive term $B$. Plugging the form we have
\begin{align*}
    B + R_{\gL}(f_\eta^\ast) \leq \frac{\epsilon + B}{1 - 2 \eta_{max}},
\end{align*}
which implies
\begin{align*}
    R_{\gL}(f_\eta^\ast) \leq \frac{\epsilon + B}{1 - 2 \eta_{max}} - B = \frac{\epsilon + 2B\eta_{max}}{1 - 2 \eta_{max}} .
\end{align*}
For exponential loss, setting $B$ to $e^{s_{\textnormal{max}}}$ completes the proof.
\end{proof}
For instance, when the noise level is $40 \%$, we have $R_{\gL}(f_\eta^\ast) \leq 5 \epsilon + 4B$. Note that the prediction score is bounded by $1/t$ in our case as the representations are projected onto the unit hypersphere.

\subsection{Lower bound of Wasserstein Distance}
\label{sec_wmd_proof}

We now establish RINCE as a lower bound of WDM~\cite{ozair2019wasserstein}. WDM is based on the Wasserstein distance, a distance metric between probability distributions defined via an optimal transport cost. Letting $\mu$ and $\nu \in \textnormal{Prob}(\sR^d \times \sR^d)$ be two probability measures, we define the Wasserstein-$1$ distance with a Euclidean cost function as

\begin{align*}
    \gW(\mu, \nu) = \inf_{\pi \in \Pi(\mu, \nu)}  \E_{\substack{(X,V) \\ (X', V')} \sim \pi} \left[ \left\|X-X' \right\| + \left\|V-V' \right\| \right],
\end{align*}
where $\Pi(\mu, \nu)$ denotes the set couplings whose marginals are $\mu$ and $\nu$, respectively. We are now ready to state our theorem.

\begin{theorem}
If $\lambda K > 1 - \lambda$ and $f$ projects the representation to a unit hypersphere, we have
\begin{align*}
&-\E \left[\gL_{\textnormal{RINCE}}^{\lambda, q=1}(\textbf{s})\right] \leq \frac{\Lip(f) \cdot (1 - \lambda) \cdot e^{1/t}}{t} \gW_1( P_{XV}^\phi,  P_X^\phi P_V^\phi).
\end{align*}
\label{thm_w}
\end{theorem}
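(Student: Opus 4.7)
The plan is to dominate $-\E[\gL_{\textnormal{RINCE}}^{\lambda,q=1}(\mathbf{s})]$ by the expectation gap of a single bounded Lipschitz function under $P_{XV}^\phi$ versus $P_X^\phi P_V^\phi$, and then invoke Kantorovich--Rubinstein duality to rewrite this gap as a $\gW_1$ distance. Throughout, I write $f$ for the projection head on the representation space (so that in the notation of the mutual-information discussion, $f$ is what was previously called $f'$), and set $h(z_x,z_v):=e^{f(z_x)^T f(z_v)/t}$.

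\textbf{Step 1 (linearization via symmetry).} Expand $\gL_{\textnormal{RINCE}}^{\lambda,1}(\mathbf{s})=-(1-\lambda)e^{s^+}+\lambda\sum_{i=1}^K e^{s_i^-}$. Taking expectations and using that the $K$ negatives $v_i\sim P_V$ are independent of $x$ gives
$$-\E[\gL_{\textnormal{RINCE}}^{\lambda,1}]=(1-\lambda)\,\E_{P_{XV}^\phi}[h]-\lambda K\,\E_{P_X^\phi P_V^\phi}[h].$$
The fact that this collapses to a single linear combination of $\E[h]$ under the two distributions is exactly what the symmetry condition $\ell(s,1)+\ell(s,-1)=0$ for the exponential loss $\ell(s,y)=-ye^s$ buys us.

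\textbf{Step 2 (coefficient balancing via the hypothesis).} The assumption $\lambda K>1-\lambda$ (equivalently $\lambda>1/(K+1)$) together with $h>0$ yields $-\lambda K\,\E_{P_X^\phi P_V^\phi}[h]\le -(1-\lambda)\,\E_{P_X^\phi P_V^\phi}[h]$, hence
$$-\E[\gL_{\textnormal{RINCE}}^{\lambda,1}]\le (1-\lambda)\bigl(\E_{P_{XV}^\phi}[h]-\E_{P_X^\phi P_V^\phi}[h]\bigr).$$

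\textbf{Step 3 (Kantorovich--Rubinstein).} For any $L$-Lipschitz $h$ on the product representation space with respect to the sum cost $d((z_x,z_v),(z_x',z_v')):=\|z_x-z_x'\|+\|z_v-z_v'\|$, one has $\E_\mu[h]-\E_\nu[h]\le L\cdot\gW_1(\mu,\nu)$. It remains to supply the Lipschitz constant.

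\textbf{Step 4 (Lipschitz estimate on the hypersphere).} Unit-norm outputs of $f$ force $|f(z_x)^T f(z_v)|\le 1$, so $h\le e^{1/t}$. Chain rule plus Cauchy--Schwarz then give $\|\nabla_{z_x}h\|\le (h/t)\,\Lip(f)\,\|f(z_v)\|\le e^{1/t}\Lip(f)/t$, and identically for $\nabla_{z_v}$. Telescoping $h(z_x,z_v)-h(z_x',z_v')$ through the intermediate point $(z_x',z_v)$ and applying each partial bound along the coordinate being moved yields $|h(z_x,z_v)-h(z_x',z_v')|\le L(\|z_x-z_x'\|+\|z_v-z_v'\|)$ with $L=e^{1/t}\Lip(f)/t$. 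Chaining Steps 1--4 produces exactly the coefficient $\Lip(f)(1-\lambda)e^{1/t}/t$ in front of $\gW_1(P_{XV}^\phi,P_X^\phi P_V^\phi)$.

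I expect the main obstacle to be Step 4: one must ensure that the coordinatewise partial-derivative bound is applied with respect to the \emph{sum} cost in the definition of $\gW_1$ (rather than a Euclidean product metric), because the telescoping argument is what keeps a single $\Lip(f)$ in the final constant instead of its square. The unit hypersphere assumption is essential here as well, since without the uniform bound $h\le e^{1/t}$ the function $h$ need not be globally Lipschitz and Kantorovich--Rubinstein could not be invoked directly. Step 2 is the only place the hypothesis $\lambda>1/(K+1)$ enters, and it is exactly what is needed to convert the asymmetric combination of Step 1 into a signed expectation difference that optimal-transport duality can absorb.
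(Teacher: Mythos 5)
Your proposal is correct and follows essentially the same route as the paper's proof: linearize the $q=1$ loss via the symmetry of the exponential loss, use $\lambda K > 1-\lambda$ to replace $-\lambda K$ by $-(1-\lambda)$ in front of the negative-pair expectation, establish that $e^{f(\cdot)^Tf(\cdot)/t}$ is $\frac{\Lip(f)e^{1/t}}{t}$-Lipschitz for the sum cost via the same telescoping through an intermediate point (the paper phrases your gradient bound as a mean-value-theorem plus Cauchy--Schwarz argument, using the unit-norm outputs exactly as you do), and finish with Kantorovich--Rubinstein. Your anticipated ``obstacle'' in Step 4 is handled identically in the paper, so there is nothing to add.
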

\begin{proof}
By the additivity of expectation, we can bound the negative symmetric loss as follows
\begin{align*}
&-\E \left[\gL_{\textnormal{RINCE}}^{\lambda, q=1}(\textbf{s})\right] 
\\
&= \E_{\substack{x \sim P_X \\ v \sim P_{V|X=x} \\ v_i \sim P_V}} [(1 - \lambda)e^{f(\phi(x))^T f(\phi(v)/t} - \lambda \sum_{i=1}^K e^{f(\phi(x))^T f(\phi(v_i)/t} ]
\\
&= \E_{(x,v) \sim P_{XV}} \left[(1 - \lambda)e^{f(\phi(x))^T f(\phi(v)/t} \right] 
 - \E_{\substack{x \sim P_X \\ v_i \sim P_V}}\left[\lambda \sum_{i=1}^K e^{f(\phi(x))^T f(\phi(v_i)/t} \right]
\\
&= \E_{(x,v) \sim P_{XV}} \left[(1 - \lambda)e^{f(\phi(x))^T f(\phi(v)/t} \right] 
- \E_{\substack{x \sim P_X \\ v_i \sim P_V}} \left[\lambda \sum_{i=1}^K e^{f(\phi(x))^T f(\phi(v_i)/t} \right]
\\
&= \E_{(x,v) \sim P_{XV}} \left[(1 - \lambda)e^{f(\phi(x))^T f(\phi(v)/t} \right] 
- \lambda K\E_{\substack{x \sim P_X \\ v \sim P_V}} \left[ e^{f(\phi(x))^T f(\phi(v)/t} \right]
\\
&\leq (1 - \lambda) \cdot (\E_{(x,v) \sim P_{XV}} \left[e^{f(\phi(x))^T f(\phi(v)/t} \right] - \E_{\substack{x \sim P_X \\ v \sim P_V}} \left[ e^{f(\phi(x))^T f(\phi(v)/t} \right]),
\end{align*}
where the last equality follows by $\lambda K > 1 - \lambda$. Note that for $\frac{-1}{t} \leq s \leq \frac{1}{t}$, which implies $ |\nabla_s e^s| \leq e^{1/t}$. Therefore, by the mean value theorem, we have
\begin{align*}
    &|e^{f(\phi(x))^T f(\phi(v))/t} - e^{f(\phi(x'))^T f(\phi(v'))/t}|
    \\
    &\leq \frac{e^{1/t}}{t} | \langle f(\phi(x)), f(\phi(v)) \rangle - \langle f(\phi(x')), f(\phi(v')) \rangle| \tag{Mean Value Theorem}
    \\
    &= \frac{e^{1/t}}{t} | \langle f(\phi(x)) - f(\phi(x')), f(\phi(v)) \rangle + \langle f(\phi(x')), f(\phi(v) - f(\phi(v')) \rangle|
    \\
    &\leq \frac{e^{1/t}}{t}(| \langle f(\phi(x)) - f(\phi(x')), f(\phi(v)) \rangle| + |\langle f(\phi(x')), f(\phi(v) - f(\phi(v')) \rangle|)
    \\
    &\leq \frac{e^{1/t}}{t}(\|f(\phi(x)) - f(\phi(x'))\|\|f(\phi(v))\| + \|f(\phi(v) - f(\phi(v'))\| \|f(\phi(x'))\|) \tag{Cauchy–Schwarz Ineq.}
    \\
    &= \frac{e^{1/t}}{t}(\|f(\phi(x)) - f(\phi(x'))\| + \|f(\phi(v) - f(\phi(v'))\| ) \tag{$f(\phi(x))$ is unit norm}
    \\
    &\leq \frac{\Lip(f) \cdot e^{1/t}}{t}(\|\phi(x) - \phi(x')\| + \|\phi(v) - \phi(v')\|)
    \\
    &= \frac{\Lip(f) \cdot e^{1/t}}{t} d((\phi(x), \phi(v)), (\phi(x'), \phi(v'))).
\end{align*}
We can see that the Lipschitz constant of $\exp(f(\cdot, \cdot))$ with respect to the metric $d$ is bounded by $\frac{\Lip(f) \cdot e^{1/t}}{t}$. Therefore, by Kantorovich-Rubinstein duality, we have
\begin{align*}
&-\E \left[\gL_{\textnormal{RINCE}}^{\lambda, q=1}(\textbf{s})\right] 
\\
&\leq (1 - \lambda) \cdot (\E_{(x,v) \sim P_{XV}} [e^{f(\phi(x))^T f(\phi(v)/t}] 
\\
&\qquad\qquad\qquad\qquad - \E_{\substack{x \sim P_X \\ v \sim P_V}}[ e^{f(\phi(x))^T f(\phi(v)/t}]),
\\
&\leq \frac{\Lip(f) \cdot (1 - \lambda) \cdot e^{1/t}}{t} \gW_1(\phi_\# P_{XV}, \phi_\# P_X \cdot \phi_\#P_V)
\end{align*}
\end{proof}

\subsection{Noisy Wasserstein Dependency Measure}
\label{sec_noisy_wdm}
The result is a simple combination of Corollary \ref{cor_noisy} and Theorem \ref{thm_w}. If $\lambda \geq \frac{\eta K - \eta + 1}{\eta K - \eta + 1 + K}$, by the assumption of additive noisy models and the symmetry of loss, we have
\begin{align*}
&-\E \left[\gL_{\textnormal{RINCE}}^{\lambda, q=1}(\textbf{s})\right] 
\\
&= \E_{\substack{(x,v) \sim P_{XV}^\eta \\ v_i \sim P_V}} [(1 - \lambda)e^{f(\phi(x))^T f(\phi(v)/t}   - \lambda \sum_{i=1}^K e^{f(\phi(x))^T f(\phi(v_i)/t} ]
\\
&= \E_{(x,v) \sim P_{XV}^\eta} \left[(1 - \lambda)e^{f(\phi(x))^T f(\phi(v)/t} \right]  - \E_{\substack{x \sim P_X \\ v_i \sim P_V}}\left[\lambda \sum_{i=1}^K e^{f(\phi(x))^T f(\phi(v_i)/t} \right]
\\
&= (1 - \lambda)(1-\eta)\E_{(x,v) \sim P_{XV}} \left[e^{f(\phi(x))^T f(\phi(v)/t} \right]  - K \cdot (\lambda - \eta + \eta \lambda) \E_{\substack{x \sim P_X \\ v \sim P_V}}\left[  e^{f(\phi(x))^T f(\phi(v)/t} \right] \tag{symmetry}
\\
&\leq (1 - \lambda)(1 - \eta) \cdot (\E_{(x,v) \sim P_{XV}} \left[e^{f(\phi(x))^T f(\phi(v)/t} \right] - \E_{\substack{x \sim P_X \\ v_i \sim P_V}} \left[ e^{f(\phi(x))^T f(\phi(v_i)/t} \right]) \tag{$\lambda \geq \frac{\eta K - \eta + 1}{\eta K - \eta + 1 + K}$}
\\
& \leq (1-\eta) \cdot \frac{\Lip(f) \cdot (1 - \lambda) \cdot e^{1/t}}{t} \gW_1(\phi_\# P_{XV}, \phi_\# P_X \cdot \phi_\#P_V)
\\
&= (1-\eta) \cdot L \cdot I_\gW(\phi(X),\phi(V)).
\end{align*}

\subsection{InfoNCE is not symmetric}
\label{sec_infonce_asymmetric}
Note that by taking the derivative with respect to the prediction score $s$, the definition is equivalent to $\frac{\partial \gL(s, 1)}{\partial s} + \frac{\partial \gL(s, -1)}{\partial s} = 0 \;\forall s \in \sR$.

\begin{align*}
    \frac{\partial \gL_{\textnormal{InfoNCE}}(\textbf{s})}{\partial s^+} &= \frac{-1}{\gL_{\textnormal{InfoNCE}}(\textbf{s})} \cdot \frac{e^{s^+} \cdot \sum_{i=1}^K e^{s_i^-}}{(e^{s^+} + \sum_{i=1}^K e^{s_i^-})^2}
    \\
    \frac{\partial \gL_{\textnormal{InfoNCE}}(\textbf{s})}{\partial s_i^-} &= \frac{-1}{\gL_{\textnormal{InfoNCE}}(\textbf{s})} \cdot \frac{e^{s^+}(1 - e^{s_i^-}) + \sum_{i=1}^K e^{s_i^-}}{(e^{s^+} + \sum_{i=1}^K e^{s_i^-})^2}.
\end{align*}
Within a batch of data, the gradients with respect to $s^+$ and $s^-$ are entangled and do not sum to a constant, which fail to meet the symmetry condition.

\section{Experiment Details}
\label{sec_exp_detail}

\subsection{CIFAR-10}
We follow the experiment setup in \citep{chuang2020debiased}, where the SimCLR \citep{chen2020simple} models are trained with Adam optimizer for 500 epochs with learning rate 0.001 and weight decay 1e$-$6. The encoder is ResNet-50 and the dimension of the latent vector is 128. The temperature is set to $t=0.5$. The models are then evaluated by training a linear classifier for 100 epochs with learning rate 0.001 and weight decay 1e$-$6. We use the PyTorch code in Figure \ref{fig_code} to generate the data augmentation noise.

\begin{figure*}[htbp]
\lstinputlisting[language=Python]{scripts/da.py}
\caption{PyTorch code for CIFAR-10 data augmentation noise.} 
\label{fig_code}
\end{figure*}

\subsection{ImageNet}
\paragraph{
SimCLR} We adopt the SimCLR implementation\footnote{\url{https://github.com/PyTorchLightning/lightning-bolts/tree/master/pl_bolts/models/self_supervised/simclr}} from PyTorch Lightning \citep{falcon2020framework}. In addition, we spot a bug and fix the implementation of negative masking of PyTorch Lightning according to Figure \ref{fig_imgnet_code} and achieve 68.9 top-1 accuracy on ImageNet (the one reported in the PyTorch Lightning's website is 68.4). To implement RINCE, we only modify the lines that calculates loss according to Figure \ref{fig_objective_code}. 

\paragraph{
Mocov3} We adopt the official code\footnote{\url{https://github.com/facebookresearch/moco-v3}} from Mocov3 \citep{chen2021empirical}. To implement RINCE, we only modify the lines that calculates loss in \texttt{moco/builder.py} according to Figure \ref{fig_objective_code}.

\begin{figure*}[htbp]
\lstinputlisting[language=Python]{scripts/negmask.py}
\caption{PyTorch Lightening implementation of SimCLR. The original implementation of negative masking (commented out) is problematic because it subtracts $e^{1/t}$ to remove similarity measure for pairs that consist of the same images. However, subtracting a constant does not alter the gradient with respect to the model parameters. In particular, there are still gradients backpropagating through the false positive pairs. We fix it by directly filtering out those false pairs with a negative mask.} 
\label{fig_imgnet_code}
\end{figure*}

\subsection{Kinetics-400}
We adopt the official implementation\footnote{\url{https://github.com/facebookresearch/AVID-CMA}} from \citep{morgado2021audio}. Similarly, we only modify the loss function in the \texttt{criterions} directory. In particular, we use the SimCLR style implementation for both InfoNCE and RINCE loss. We also adopt the same hyperparameters described in the git repository for training. We set the learning rate to $1e-3$ to finetune the models on downstream classification tasks such as UCF101 and HMDB51 with the provided evaluation code.

\subsection{ACAV100M}
We again modify the official implementation of \citep{morgado2021audio} for the ACAV100M experiments, where we modify the data loader to adopt it to ACAV100M. Different from Kinetics-400 experiments, the input size is set to $8 \times 224^2$ during the finetuning process for computational efficiency. We again use the exact same set of hyperparameters from \citep{morgado2021audio} for both training and testing.

\subsection{TU-Dataset}
We adopt the official implementation\footnote{\url{https://github.com/Shen-Lab/GraphCL/tree/master/unsupervised_TU}} from \citep{you2020graph}. To implement RINCE, we only modify the loss in \texttt{gsimclr.py} file.

\section{Additional Results}
\subsection{Exact Number of CIFAR-10 and ACAV100M Experiments}
We first provide the exact numbers for CIFAR-10 and ACAV100M experiments.
\begin{table}[htp]
     \centering
     \footnotesize
    \begin{tabularx}{0.45\textwidth}{r|*{6}{c}}
    \toprule
    \bf $\eta$ &  InfoNCE &  $q  = 0.01$ &  $q  = 0.1$ & $q  = 0.5$ & $q  = 1.0$ &   \\
    \hline
    \hline
   0.0 & 93.4$\pm$0.2  & 93.4$\pm$0.2 & 93.2$\pm$0.1 & 93.3$\pm$0.1 & 93.0$\pm$0.2   \\
    0.2 & 93.1$\pm$0.1 & 93.3$\pm$0.3 & 93.0$\pm$0.1 &  93.2$\pm$0.2 & 92.9$\pm$0.3  \\
    0.4 & 90.7$\pm$0.2 & 93.0$\pm$0.2 & 92.0$\pm$0.9 & 93.1$\pm$0.1 & 92.8$\pm$0.1 \\
    0.6 & 88.2$\pm$0.4& 90.8$\pm$0.2 & 90.6$\pm$0.3 & 92.9$\pm$0.2  & 92.4$\pm$0.2   \\
    0.8 & 87.1$\pm$0.5 & 89.1$\pm$0.2 & 89.3$\pm$0.1  & 89.9$\pm$0.3 & 91.6$\pm$0.3 \\
    1.0 & 87.1$\pm$1.0 & 88.7$\pm$0.1 & 89.3$\pm$0.4 & 89.3$\pm$0.6 & 88.2$\pm$0.3\\
    \bottomrule
        \end{tabularx}
    \vspace{-1mm}
    \caption{CIFAR-10 Label Noise}
\end{table}

\begin{table}[htp]
     \centering
     \footnotesize
    \begin{tabularx}{0.45\textwidth}{r|*{6}{c}}
    \toprule
    \bf $\eta$ &  InfoNCE &  $q  = 0.01$ &  $q  = 0.1$ & $q  = 0.5$ & $q  = 1.0$ &   \\
    \hline
    \hline
   0.0 & 91.1$\pm$0.1  & 91.6$\pm$0.1 & 91.5$\pm$0.1 & 91.8$\pm$0.2 & 90.7$\pm$0.1   \\
    0.2 & 89.3$\pm$0.1 & 89.8$\pm$0.2 & 89.7$\pm$0.1 &  90.4$\pm$0.1 & 90.9$\pm$0.1  \\
    0.4 & 87.3$\pm$0.4 & 87.7$\pm$0.5 & 87.5$\pm$0.2 & 88.8$\pm$0.1 & 89.0$\pm$0.1 \\
    0.6 & 84.5$\pm$0.2& 85.4$\pm$0.2 & 85.3$\pm$0.2 & 86.6$\pm$0.1  & 86.3$\pm$0.2   \\
    0.8 & 80.6$\pm$0.1 & 81.2$\pm$0.2 & 80.3$\pm$0.2  & 82.5$\pm$0.2 & 82.8$\pm$0.3 \\
    1.0 & 71.0$\pm$0.5 & 71.2$\pm$0.6 & 71.8$\pm$0.4 & 71.5$\pm$0.3 & 72.7$\pm$0.2\\
    \bottomrule
        \end{tabularx}
    \vspace{-1mm}
    \caption{CIFAR-10 Augmentation Noise}
\end{table}

\begin{table}[htp]
     \centering
     \footnotesize
    \begin{tabularx}{0.48\textwidth}{r|*{6}{c}}
    \toprule
    model &  20K &  50K &  100K & 200K & 500K &   \\
    \hline
    \hline
   InfoNCE (100 epoch) & 72.482  & 75.205 & 77.161 & 79.937 & 82.717   \\
    InfoNCE (150 epoch) & 72.429 & 76.13 & 78.8 &  80.095 & 83.082  \\
    InfoNCE (200 epoch) & 72.429 & 76.183 & 78.641 & 79.94 & 83.388 \\
    RINCE (100 epoch) & 73.635 & 76.685 &78.694 & 81.153 & 83.505   \\
    RINCE (150 epoch) & 74.632 & 77.505 & 79.064  & 82.263 & 83.399 \\
    RINCE (200 epoch) & 74.253 & 78.086 & 79.355 & 82.368 & 83.769\\
    \bottomrule
        \end{tabularx}
    \vspace{-1mm}
    \caption{Top1 accuracy on UCF101 of models trained on ACAV100M.}
\end{table}

\subsection{Positive Scores and Views, Continue}
We extend our analysis of Figure \ref{fig_imgnet} to InfoNCE baseline and discuss the impact of implicit weighting. We can see that the positive scores in both InfoNCE and RINCE models are correlated to the noisiness of positive pairs.
\begin{figure}[!htb]
\begin{center}   \includegraphics[width=0.85\linewidth]{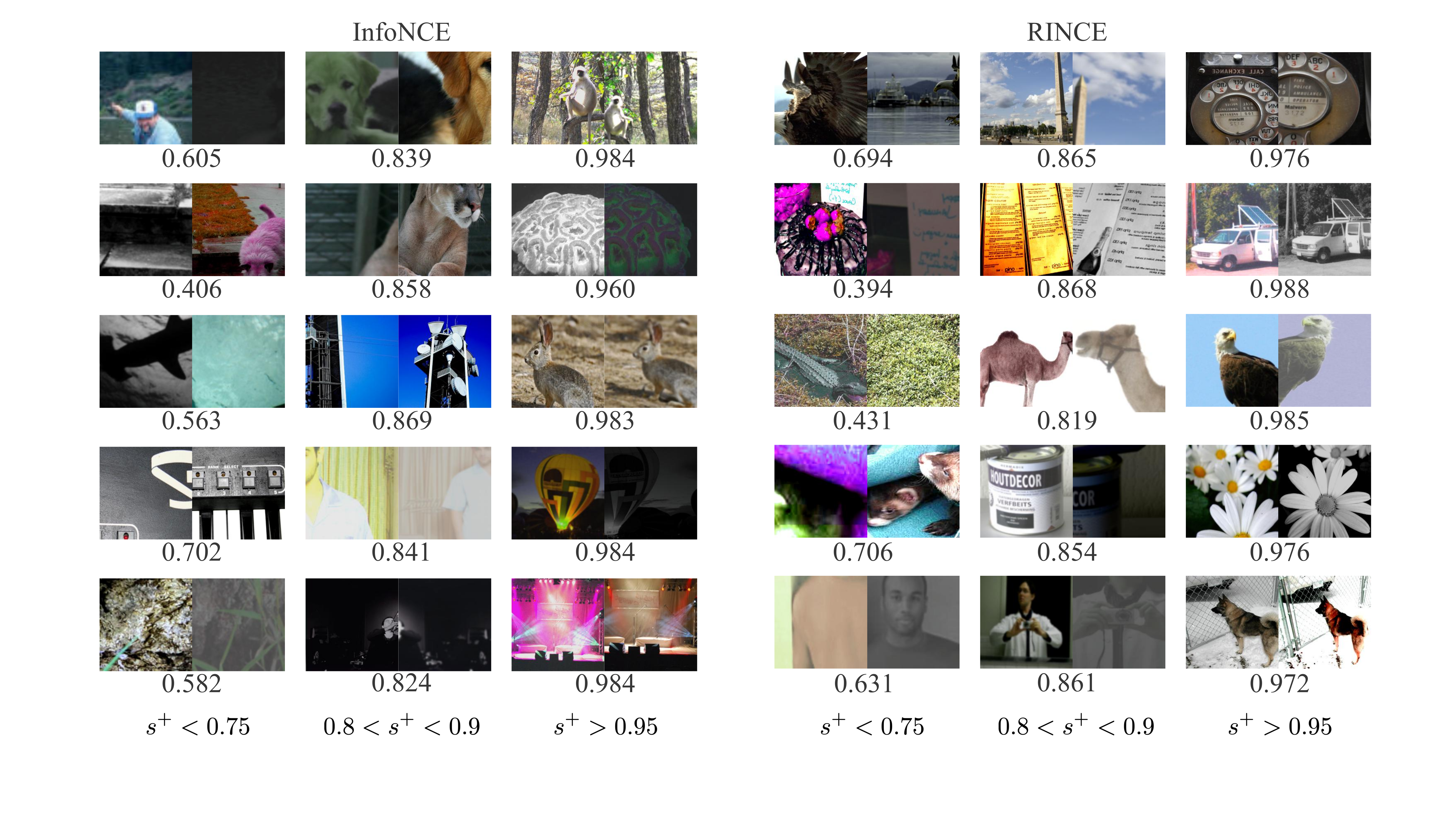}
\end{center}
   \caption{\textbf{Positive pairs and their scores.} The corresponding positive scores are shown below the image pairs. The positive scores $s^+ \in [-1, 1]$ are output by the trained InfoNCE and RINCE model (temperature $=$ 1). Pairs that have lower scores are visually noisy, while informative pairs often have higher scores.}
\label{fig_imgnet_appendix_1}

\end{figure}

We then study the distribution of positive scores and compare the positive scores output by InfoNCE and RINCE on noisy views. As Figure \ref{fig_imgnet_appendix_2} (a) shows, the positive scores of clean pairs output by RINCE is slightly higher, making the density of RINCE around score 1.0 larger than InfoNCE. Figure \ref{fig_imgnet_appendix_2} (b) gives a closer look on scores versus noisy views. We can see that InfoNCE tends to output higher scores for noisy views than RINCE, corroborating our analysis: InfoNCE tends to maximize the positive score of hard (noisy) pairs. This inherently makes the positive scores of clean pairs lower for InfoNCE, explaining the discrepancy between InfoNCE and RINCE in (a).

\begin{figure}[!htb]
\begin{center}   \includegraphics[width=0.7\linewidth]{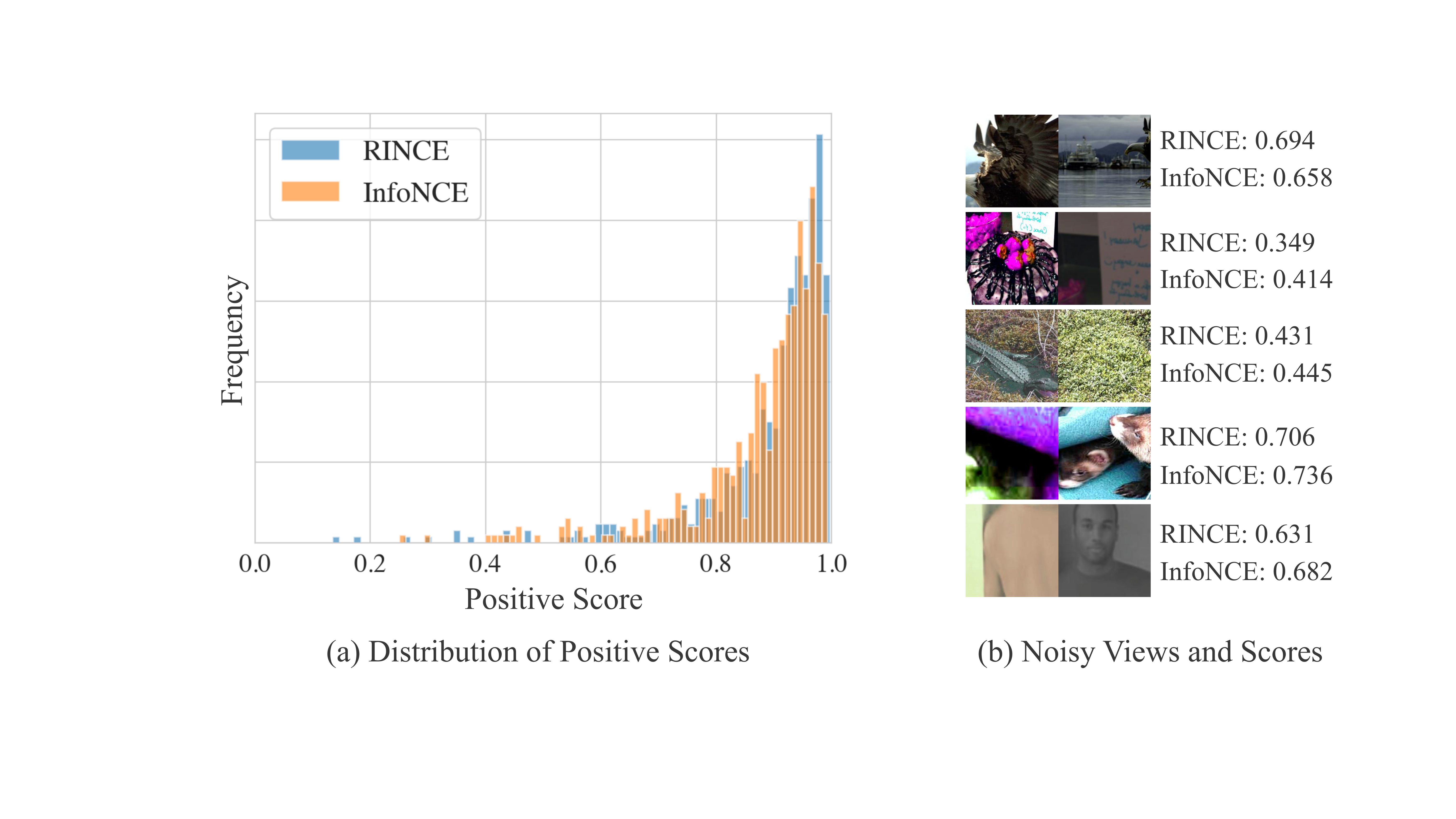}
\end{center}
   \caption{\textbf{Comparison between RINCE and InfoNCE.} (a) Distribution of Positive Scores for RINCE and InfoNCE; (b) InfoNCE outputs higher scores for noisy pairs. }
\label{fig_imgnet_appendix_2}
\end{figure}

\subsection{Ablation Study on $\lambda$}
Finally, we provide an ablation study on how $\lambda$ affect the performance of RINCE with CIFAR-10 augmentation noise experiments. We can see that in both clean and noise setting, RINCE is not sensitive to the choice of $\lambda$ as long as it is not too large. Therefore, we simply set $\lambda = 0.01$ for all vision experiments and $\lambda = 0.025$ for graph experiments.
\begin{table}[htp]
     \centering
     \footnotesize
    \begin{tabularx}{0.27\textwidth}{l|*{2}{c}}
            \toprule
            Noise Rate & 0.0 & 0.4   \\
            \hline
            \hline
            RINCE ($\lambda=0.01$) & 91.54 & 89.65
            \\
            RINCE ($\lambda=0.05$) & \textbf{91.81} & \textbf{89.81}
            \\
            RINCE ($\lambda=0.1$) & 91.32 & 89.9
            \\
            RINCE ($\lambda=0.2$) & 90.55 & 89.69
            \\
            RINCE ($\lambda=0.4$) & 90.89 & 89.39
            \\
            \bottomrule
        \end{tabularx}
    \vspace{-1mm}
    \caption{CIFAR-10 Augmentation Noise}
\end{table}

\end{document}